\def\isarxiv{1}
\def\paperTitle{RoPE Attention Can Be Trained in Almost Linear Time}
\def\paperAuthor{
Yang Cao\thanks{Wyoming Seminary.}
\and
Jiayan Huo\thanks{University of Arizona.}
\and
Yingyu Liang\thanks{The University of Hong Kong.}
\and
Zhenmei Shi\thanks{University of Wisconsin-Madison.}
\and
Zhao Song\thanks{\texttt{magic.linuxkde@gmail.com}. Simons Institute for the Theory of Computing, University of California, Berkeley.}
}
\theoremstyle{plain}
\newtheorem{theorem}{Theorem}[section]
\newtheorem{lemma}[theorem]{Lemma}
\newtheorem{definition}[theorem]{Definition}
\newtheorem{fact}[theorem]{Fact}
\newtheorem{remark}[theorem]{Remark}
\newtheorem{hypothesis}[theorem]{Hypothesis}
\renewcommand\cite\citep
\newcommand{\wt}{\widetilde}
\newcommand{\R}{\mathbb{R}}
\renewcommand{\d}{\mathrm{d}}
\newcommand{\Tmat}{{\cal T}_{\mathrm{mat}}}
\newcommand{\SETH}{\mathsf{SETH}}
\newcommand{\A}{\mathsf{A}}
\DeclareMathOperator{\supp}{supp}
\DeclareMathOperator{\poly}{poly}
\DeclareMathOperator{\diag}{diag}
\DeclareMathOperator{\vect}{vec}
\newcommand{\f}{s}
\newcommand{\h}{v}
\renewcommand{\c}{\ensuremath{\ell}}
\newcommand{\q}{\ensuremath{\beta}}
\renewcommand{\L}{\ensuremath{\mathsf{Loss}}}
\renewcommand{\r}{\ensuremath{\rho}}
\newcommand{\p}{\ensuremath{\gamma}}
\begin{document}

\ifdefined\isarxiv
%%% The below part is the title and author of ArXiv version.

\date{}
\title{\paperTitle}
\author{\paperAuthor}

\else
%%% The below part is the title and author of NeurIPS version.

\title{\paperTitle}

\author{Antiquus S.~Hippocampus, Natalia Cerebro \& Amelie P. Amygdale \thanks{ Use footnote for providing further information
about author (webpage, alternative address)---\emph{not} for acknowledging
funding agencies.  Funding acknowledgements go at the end of the paper.} \\
Department of Computer Science\\
Cranberry-Lemon University\\
Pittsburgh, PA 15213, USA \\
\texttt{\{hippo,brain,jen\}@cs.cranberry-lemon.edu} \\
\And
Ji Q. Ren \& Yevgeny LeNet \\
Department of Computational Neuroscience \\
University of the Witwatersrand \\
Joburg, South Africa \\
\texttt{\{robot,net\}@wits.ac.za} \\
\AND
Coauthor \\
Affiliation \\
Address \\
\texttt{email}
}

% The \author macro works with any number of authors. There are two commands
% used to separate the names and addresses of multiple authors: \And and \AND.
%
% Using \And between authors leaves it to \LaTeX{} to determine where to break
% the lines. Using \AND forces a linebreak at that point. So, if \LaTeX{}
% puts 3 of 4 authors names on the first line, and the last on the second
% line, try using \AND instead of \And before the third author name.

\newcommand{\fix}{\marginpar{FIX}}
\newcommand{\new}{\marginpar{NEW}}

\maketitle

\fi

\ifdefined\isarxiv
\begin{titlepage}
  \maketitle
  \begin{abstract}
    The Rotary Position Embedding (RoPE) mechanism has become a powerful enhancement to the Transformer architecture, which enables models to capture token relationships when encoding positional information. However, the RoPE mechanisms make the computations of attention mechanisms more complicated, which makes efficient algorithms challenging. Earlier research introduced almost linear time algorithms for the forward computation under specific parameter settings of bounded entries (i.e., in time $n^{1+o(1)}$ where $n$ is the number of input tokens), but has not addressed backward computation.
%while achieving a subquadratic time algorithm for other parameter regimes remains impossible unless the widely accepted Strong Exponential Time Hypothesis (SETH) is disproven. 
In this work, we develop the first almost linear time algorithm for backward computations in the RoPE-based attention under bounded entries. Our approach builds on recent advancements in fast RoPE attention computations, utilizing a novel combination of the polynomial method and the Fast Fourier Transform. Furthermore, we show that with lower bounds derived from the Strong Exponential Time Hypothesis (SETH), the bounded entry condition is necessary for subquadratic performance. 

  \end{abstract}
  \thispagestyle{empty}
\end{titlepage}

{\hypersetup{linkcolor=black}
\tableofcontents
}
\newpage

\else

\begin{abstract}

\end{abstract}

\fi

%%% The part below is the main body and reference

\section{Introduction}\label{sec:intro}

The GPT-o3~\citep{gpto1},  Llama 3.3~\citep{llama3_arxiv,llama3_blog}, Claude 3.5~\citep{claude3_pdf} are transformed-based Large Language Models (LLMs), have become important tools in natural language processing, which enables applications from machine translation to sentiment analysis. In the Transformer architecture, attention mechanisms, computationally intensive operations, compute token correlations within the sequence~\citep{vsp+17}. The efficiency of attention computations, both in forward computations and backward gradient computations, directly influenced the scalability and feasibility of training LLMs, especially when the size and input context length of these LLMs continue to grow~\citep{as24_iclr,as23}.
In recent research, Rotating Position Embedding (RoPE)~\citep{rope24} has become a popular modification to the attention mechanism, and it has enabled models to capture positional relationships between tokens with better expressiveness. The RoPE mechanism has been adopted in state-of-the-art models, such as Llama~\citep{llama_arxiv,llama2_arxiv,llama3_arxiv}, Claude~\citep{claude3_pdf}, Apple's LLMs~\citep{apple_afm,apple_mm1}, and many others, but the implementation of RoPE complicates attention computation due to the additional structure imposed by position-dependent rotations~\citep{rope24}. 
In recent work, \cite{as24_rope} demonstrated an efficient algorithm for forward computation of RoPE attention in the bounded entry regime. However, backward computation, the process of calculating gradients for model optimization, has been explored less.

Backward computation introduces additional complexity because it requires the evaluation of gradients that involve non-linear transformations of the attention matrix and positional embeddings. In~\cite{as23}, they present their algorithm to approximate forward computations of fast attention with bounded entries using the polynomial methods and low-rank approximation. In~\cite{as24b}, they propose almost linear time, i.e., $n^{1+o(1)}$ where $n$ is the number of input tokens, an algorithm to compute backward gradients for fast attention with bounded entries. In recent work, \cite{as24_rope} proposes an efficient algorithm to perform the forward computation of RoPE-based attention using the polynomial methods and Fast Fourier Transform. 
Therefore, it is natural to raise the key question: 
\begin{center}
{\it Can backward computations for the RoPE attention match the efficiency of their forward computations in the bounded entry regime? 
}
\end{center}

In this work, we aim to address the question by presenting the first efficient algorithm for backward computation in RoPE attention under the bounded entry. Our main result shows that the backward gradient computations for the RoPE attention match their forward version's efficiency. Therefore, by leveraging our algorithm in approximating backward computations in the RoPE attention with the forward algorithm from~\cite{as24_rope}, we will improve the overall time complexity of RoPE attention to almost linear time with bounded entries.

To the best of our knowledge, this is the first work to characterize the fine-grained complexity of backward computations in RoPE attentions, extending prior results on forward computations in RoPE attention~\citep{as24_rope}. Our contribution can be described as follows.
\begin{itemize}
    \item We formulated the closed-form gradient for the RoPE attention (see Lemma~\ref{lem:grad_closed}) along with its exact time complexity (see Theorem~\ref{thm:grad_time}). 
    \item We derive the almost linear time backward approximation (see Theorem~\ref{thm:grad_low_rank}) for RoPE attention based on the closed-form gradient.
    \item We show that with lower bounds derived from the SETH, the bounded entry condition is necessary for subquadratic performance (see Theorem~\ref{thm:lower_bound}). 
\end{itemize}

\paragraph{Roadmap.}
In Section~\ref{sec:related_work}, we present some relevant papers. 
In Section~\ref{sec:preli_rope}, we show essential components for the RoPE attention.
Section~\ref{sec:gradient_time} gives the closed form of the RoPE Attention gradient and discusses the exact time complexity to compute it. 
Section~\ref{sec:low_rank} shows the fast computation of the RoPE Attention gradient in almost linear time. Section~\ref{sec:hardness} details the lower bounds of hardness. Finally, Section~\ref{sec:conclusion} provides conclusions and avenues for future work. %%% Section 1. Introduction
\section{Related Work}\label{sec:related_work}

\paragraph{Rotary Position Embedding.}
At a high level, RoPE gives more expressive power to the model in exchange for making the computational problem more complicated. In particular, many prior algorithms, such as the algorithm of~\cite{as23}, no longer apply to RoPE for fundamental reasons we will discuss. 
RoPE was proposed by~\cite{rope24} and has been used extensively in large-scale industrial models. Examples which are known to use RoPE include the open-source models released by Meta such as Llama~\citep{llama_arxiv} (see page 3), Llama 2~\citep{llama2_arxiv} (see page 5), Llama 3~\cite{llama3_arxiv} (see page 7), and the close-source LLM Claude 3.5~\citep{claude3_pdf} released by Anthropic. Apple also incorporates RoPE into their LLM architecture (see~\cite{apple_mm1}, and page 3 of ~\cite{apple_afm}).
% The idea behind RoPE is to rotate the query and key vectors in the self-attention mechanism. The rotation is position-dependent and designed such that the inner product between two position-encoded vectors reflects their relative position in the input. In the RoPE attention mechanism, pairs of tokens with a longer relative distance will have a smaller correlation.

\paragraph{Fast Attention Computation.}
The attention mechanism has often been criticized for its quadratic computational complexity concerning context length, a challenge that becomes more pronounced as the sequence length grows in today's LLMs~\citep{gpt4_arxiv,gpto1,llama3_arxiv,llama3_blog,claude,claude3_pdf}.
However, this issue can be addressed using polynomial kernel approximation methods \citep{aa22}, which facilitate constructing the approximated attention matrix using low-rank approximations. Such techniques enable substantial improvements in computation speed, allowing a single attention layer to perform both training and inference nearly as fast as linear time \citep{as23, as24b}. \cite{lss+24} further extends this efficiency to support multi-layer transformer architectures for both training and inference. 
In addition, these techniques can generalize to advanced attention mechanisms, such as tensor attention, while preserving the almost linear time complexity in both training and evaluation phases \citep{as24_iclr, lssz24_tat}. 
Beyond this, alternative theoretical methods also exist. For example, the conv-basis approach introduced in \cite{lls+24_conv} offers another avenue for speeding up attention computation.

\paragraph{Gradient Approximation.} 
Using low-rank approximation to approximate the gradient is a common approach for optimizing the training of transformers by reducing the complexity in the computations, such as \cite{lss+24,lssz24_tat,as24b,hwsl24}. Specifically, 
\cite{as24b} extends the low-rank approximation technique developed in \cite{as23}, which studies the forward computation of attention to approximate the gradient of the attention computation. In \cite{lss+24}, they further develop the low-rank approximation technique in \cite{as24b} to study multi-layer transformers, showing they can use nearly linear time to approximate the backward computations of multi-layer transformers. On the other hand, \cite{lssz24_tat} generalizes the gradient approximation of \cite{as24b} to another direction: they use it to study the training of the tensor version of attention computation that develops from the forward computation as in \cite{as24_iclr}. Finally, \cite{hwsl24} leverages the low-rank approximation technique to study the training of Diffusion Transformers (DiTs).

% \paragraph{Theoretical Foundation of LLMs.}
%  The attention has become a cornerstone in AI, particularly in large language models (LLMs), which excel in NLP tasks such as machine translation, text generation, and sentiment analysis due to their ability to capture complex contextual relationships. However, understanding the attention mechanism from a theoretical perspective remains an ongoing challenge. Several works have explored the theoretical foundations and computational complexities of attention \cite{tby+19,zhdk23,bsz23,as23,syz24,cll+24_rope,cll+24_mamba},
%  % ,lls+24_tensor,mosw22,szz24,als19_icml,hswz22,bpsw21,syz21,als+23
%  focusing on areas such as efficient attention \cite{hjk+23,smn+24,szz+21,lll21,lls+24_conv,lssz24_tat,lss+24},
%  % llss24_sparse,lls+24_prune,cls+24,lls+24_io,hwsl24,hwl24,hcl+24,whhl24,hlsl24,whl+24,hyw+23,as24b,gswy23,smn+24,lll+24,lssy24,ssz+25_dit,ssz+25_prune
%  optimization \cite{dls23,lswy23,swy23,gsy23_hyper,lsxy24,gsy23_coin,gsx23}, and the analysis of emergent abilities \cite{b20,wtb+22,al23,j23,xsw+23,lls+24_grok,xsl24,cll+24}.
%  % ,lss+24_relu,hwg+24,wsh+24,sxy23,wms+24,xsl24,kls+24
%  Notably, \cite{zhdk23} introduced an algorithm with provable guarantees for attention approximation, \cite{kwh23} proved a lower bound for attention computation based on the Strong Exponential Time Hypothesis, and \cite{as23} provided both an algorithm and hardness results for static attention computation using low-rank approximation \cite{gsyz23,syyz23_weighted}.
\section{Preliminaries on RoPE Attention}\label{sec:preli_rope}

In Section~\ref{sec:preli:notation}, we talk about the notation and foundational concepts.
In Section~\ref{sec:preli:problem}, we formalize our problems.
% In Section~\ref{sec:preli:define}, we give definitions of our key terms. 
In Section~\ref{sec:preli:reformulate}, we talk about the reformulation of the loss function using the tensor trick and analyze the computational complexity of the reformulated expressions.

\subsection{Notation}\label{sec:preli:notation}
For $n \in \mathbb{Z}^+ \cup \{0\}$, for set $\{1,2,\cdots, n\}$, we denote the set by using the notation $[n]$. Here, we define the concept of nearly linear time when the time is $O(n \log n)$. We introduce the concept of almost linear time when time is $O(n^{1+o(1)})$. Given $a$ as any vector, we say the diagonal matrix of $c$ is $\diag(c)$ where $c_{i}$ means the $i,i$-th entry in the matrix $\diag(c)$. For any matrix, we denote the support of the matrix using the notation $\supp$, that is, the set of entries where the matrix is nonzero. $B^\top$ is defined as $(B^\top)_{i,j}:=B_{j,i}$. 
Suppose there are two vectors $c,d$ of the same length. We denote the entry-wise multiplication using the notation $c \circ d$; that is, the $i$-th entry in that vector is $c_i d_i$. 
To denote the Frobenius norm, for any matrix $B$, we denote it as $\|B\|_F := \sqrt{\sum_{i,j} B_{i,j}^2}$; to denote the maximum norm of matrix $B$, we use $\|B\|_\infty := \max_{i,j} |B_{i,j}|$. Suppose there are two matrices $C, D$ of the same dimensions. We represent the Hadamard product or the entry-wise multiplication by using the notation $C \circ D$, that is, $(i,j)$-th entry of the matrix is $C_{i,j} \cdot D_{i,j}$.  Let $C \in \R^{n_0 \times m_0}$ and $D \in \R^{n_1 \times m_1}$. We define $C \otimes D$ is an $n_0 n_1 \times m_0 m_1$ matrix, where $(C \otimes D)_{(j_0 - 1) n_1 + j_1, (i_0-1) m_2+i_1}$ 
is equal to $C_{j_0,i_0} D_{j_1,i_1}$ for any $j_0 \in [n_0], i_0 \in [m_0], j_1 \in [n_1], i_1 \in [m_1]$.

\subsection{Problem Definition}\label{sec:preli:problem}
Let $n$ be the number of input tokens, and let $d$ be the hidden/feature dimensions. 
We state the generalization of the standard RoPE attention from~\cite{as24_rope}.
\begin{definition}[A General Approximate RoPE Attention Computation, $\mathsf{ARAttC}$, Definition 1.1 in~\cite{as24_rope}]\label{def:general_RoPE}
Let $B > 0$ and $\epsilon >0$ denote two parameters. Given a set of matrices $W_{-(n-1)}, \cdots, W_{-1}$, $W_0, W_1,\cdots, W_{n-1} \in \R^{d \times d}$ where $\mathrm{supp}(W_i) \subset S$ for all $i \in \{-(n-1), \cdots, -1,$ $0, 1, \cdots , n-1\}$. Here $S \subseteq [d] \times [d]$ where $|S| = O(d)$.  Given three $n\times d$ matrices $Q,K,V$ with the guarantee that $\| Q \|_{\infty}, \| K \|_{\infty}, \| V \|_{\infty} \leq B$ and $\| W \|_{\infty} \leq 1$.  
We define matrix $A \in \R^{n \times n}$ as, for $i,j \in [n]$,
% \begin{align*}
$
    A_{i,j} := \exp(   Q_{i,*}  W_{i-j}  K_{j,*}^\top / d  ).
$
% \end{align*}
We define $D := \diag ( A {\bf 1}_n )$. The goal of General Approximate RoPE Attention Computation is to output a matrix $T \in \R^{n \times d}$ such that $\| T - \mathsf{ARAttC} \|_{\infty} \leq \epsilon$ is small, where $ \mathsf{ARAttC}:= D^{-1} A V$.  For matrix $M$, we use $\| M \|_{\infty}:= \max_{i,j} |M_{i,j}|$. Note that the $1/d$ factor inside $\exp$ in the definition of $A$ is a normalization factor.
\end{definition}

Our focus is to find weights to fit the attention
to a desired output. Let $Q := A_1X_1$, $K := A_2X_2$, and $V := A_3Y$. We use $X_1$, $X_2$, and $X_3$ to represent the weights $W_Q$, $W_K$ and $W_V$, respectively. We use $A_1, A_2$, and $A_3$ to replace the input matrix to handle the more general settings such as cross attention. Then, the attention matrix is as follows.
\begin{align*}
    A(X_1, X_2)_{i,j} := &~ \exp((A_1X_1)_{i, *}W_{i-j} (A_2X_2)^\top_{j, *}/d) \\
    = & ~ \exp(A_{1,i, *} X_1W_{i-j}X_2^\top A^\top_{2, j, *}/d).
\end{align*}
We define $w_{i-j} := \vect(W_{i-j}) \in \R^{d^2}$ and define $\mathsf{W}$ such that $\mathsf{W}_{j_0, *}$ is an $1 \times d^2$ block and $\mathsf{W}_{i+(j-1)n, *} := w_{i-j}^\top$. Here, let $\mathsf A := A_1 \otimes A_2 \in \R^{n^2 \times d^2}$ and $\mathsf X := X_1 \otimes X_2 \in \R^{d^2 \times d^2}$.
We can show that
\begin{align*}
   &~A_{1,i, *}X_1W_{i-j}X_2^\top\A^\top_{2, j, *} \\= &~ (A_{1,i, *} \otimes A_{2,j, *}) (X_1 \otimes X_2)\vect(W_{i-j}) \\ 
   = ~ & \mathsf A_{i+(j-1)n, *} \mathsf{X}w_{i-j},
\end{align*}
where the first step uses the tensor trick, and the second step uses the definitions of $w_{i-j}, \mathsf A$, and $\mathsf X$. Thus we can reformulate the attention matrix $A$ as, for $i , j \in [n]$
\begin{align*}
    A(\mathsf{X})_{i,j} = \exp(\underbrace{\mathsf A_{i+(j-1)n, *}}_{1 \times d^2} \underbrace{\mathsf{X}}_{d^2 \times d^2} \underbrace{w_{i-j} / d}_{d^2 \times 1}).
\end{align*}
Using the tensor trick again, we have
\begin{align*}
    A(\mathsf{X})_{i,j} = & ~ \exp(\underbrace{(\mathsf A_{i+(j-1)n, *}  \otimes w_{i-j}^\top)}_{1 \times d^4} \underbrace{\vect({\mathsf{X}})/d}_{d^4 \times 1}) \\
    = & ~ \exp(\underbrace{(\mathsf A_{i+(j-1)n, *}  \otimes \mathsf W_{i+(j-1)n, *})}_{1 \times d^4} \underbrace{\vect({\mathsf{X}})/d}_{d^4 \times 1}).
\end{align*}
Hence, by definition of row-wise Kronecker product, we have
\begin{align*}
    \vect(A(\mathsf{X})) = \exp(\underbrace{(\mathsf A \oslash \mathsf W)}_{n^2 \times d^4} \underbrace{\vect({\mathsf{X}})/d}_{d^4 \times 1}).
\end{align*}
We define the matrix $D(\mathsf X) \in \R^{n \times n}$ as 
\begin{align*}
    D(\mathsf X) = \diag(\underbrace{A(\mathsf X)}_{n \times n} \underbrace{{\bf 1}_n}_{n \times 1}).
\end{align*}
Then, the optimization problem in the context of RoPE attention computation is described as follows:
\begin{definition}[Optimize RoPE Attention]\label{def:rope_attn_opt}
Let $B>0$ and $\epsilon >0$ denote two parameters. Given a set of matrices $W_{-(n-1)}, \cdots, W_{-1}$, $W_0, W_1,\cdots, W_{n-1} \in \R^{d \times d}$ where $\mathrm{supp}(W_i) \subset S$ for all $i \in \{-(n-1), \cdots, -1,$ $0, 1, \cdots , n-1\}$. Here $S \subseteq [d] \times [d]$ where $|S| = O(d)$. For $i, j \in [n]$, let $\mathsf{W} \in \R^{n^2 \times d^2}$ such that $\mathsf W_{i+(j-1)n, *} = \vect({W_{i-j}})$.
Here, we suppose four $n \times d$ matrices $A_1, A_2, A_3, E $, and we have three $d \times d$ matrices $X_1, X_2, Y$. Let $\mathsf X := X_1 \otimes X_2 \in \R^{d^2 \times d^2}$. We define the matrix $A(\mathsf{X}) \in \R^{n \times n}$ as the matrix representation of $\exp(\underbrace{(\mathsf A \oslash \mathsf W)}_{n^2 \times d^4} \underbrace{\vect({\mathsf{X}})/d}_{d^4 \times 1})$ and the $n \times n$ matrix $D(\mathsf X) := \diag(\underbrace{A(\mathsf X)}_{n \times n} \underbrace{{\bf 1}_n}_{n \times 1}).$
The RoPE attention optimization problem $\min_{\mathsf X \in \R^{d^2 \times d^2}} \L(\mathsf X)$ is formulated as follows:
\begin{align*}
    \min_{\mathsf X \in \R^{d^2 \times d^2}} 0.5 \|D(\mathsf X)^{-1}A(\mathsf X) A_3 Y - E\|_F^2.
\end{align*}
\end{definition}

Note that we are able to get the gradient computation of $\L$ with respect to $X_1$ or $X_2$ based on the chain rule because
\begin{align*}
    \frac{\d \L(X_1, X_2)}{\d X_1} = & ~ \frac{\d \L(\mathsf{X})}{\d \mathsf X} \frac{\d \mathsf{X}}{\d X_1} \\
    = & ~ \frac{\d \L(\mathsf{X})}{\d \mathsf X} \frac{\d (X_1 \otimes X_2)}{\d X_1} \\
    = & ~ \frac{\d \L(\mathsf{X})}{\d \mathsf X} ( I_{d \times d} \otimes X_2).
\end{align*}
Our approximation task can be formalized as follows. 
\begin{definition}[The Approx of the gradient of RoPE Attention Loss Function, $\mathsf{ARAttLGC}(n, d, B, \epsilon)$]\label{def:rope_approx}
Let $B>0$ and $\epsilon >0$ denote two parameters. Given a set of matrices $W_{-(n-1)}, \cdots, W_{-1}, W_0$, $W_1,\cdots, W_{n-1} \in \R^{d \times d}$ where $\mathrm{supp}(W_i) \subset S$ for all $i \in \{-(n-1), \cdots, -1,$ $0, 1, \cdots , n-1\}$. Here $S \subseteq [d] \times [d]$ where $|S| = O(d)$. For $i,j \in [n]$, let $\mathsf{W} \in \R^{n^2 \times d^2}$ such that $\mathsf W_{i+(j-1)n, *} = \vect({W_{i-j}})$. Let $X_1, X_2, Y \in \R^{d \times d}$. Let $\mathsf{X} := X_1 \otimes X_2 \in \R^{d^2 \times d^2}$. We have four $n \times d$ matrices Let $A_1, A_2, A_3, E$. Let $\mathsf{A} \in \R^{n^2 \times d^2}$ such that $\mathsf{A}$ equals to an $n^2 \times d^2$ matrix from $ A_1 \otimes A_2$. Assume $\|A_1X\|_\infty \leq B, \|A_2X\|_\infty \leq B, \|A_3Y\|_\infty \leq B, \|W\|_\infty \leq 1$. Assume that all the $\log(n)$ bits model is applied throughout all numbers in matrices. We define $\L(\mathsf{X})$ from Def.~\ref{def:rope_attn_opt}. Here, we define $\frac{\d \L(\mathsf X)}{\d \mathsf X}$ as the loss function gradient.
Then, our target is to output a vector $\wt{g} \in \R^{d^4}$ satisfying:
\begin{align*}
    \| \wt{g} - \frac{\d \L(\mathsf X)}{\d \mathsf X} \|_\infty \leq \epsilon.
\end{align*}
\end{definition}

\subsection{Reformulation of the Loss Function}\label{sec:preli:reformulate}
% Based on the notation in Section~\ref{sec:preli:define}, we can reformulate and simplify our loss function.

In this section, we are able to reformulate and simplify the loss function based on the definitions provided in Section~\ref{sec:app_preli:define}. This reformulation provides a structured representation of the loss in terms of its components, using the tensor trick to simplify computations and facilitate analysis.

The following lemma formalizes this reformulation, consolidating the expressions for the loss function and connecting its components:

\begin{lemma}[Loss Function Formulation]\label{lemma:L}
Given three $n\times d$ input sequence matrices $A_1$, $A_2$, and $A_3$, we define $\mathsf{A} = A_1 \otimes A_2 \in \R^{n^2 \times d^2}$ and $\mathsf{X} = X_1 \otimes X_2 \in \R^{d^2 \times d^2}$, where $\otimes$ denotes Kronecker product. Given $W$ is a $ n^2 \times d^2$ matrix, we define $\wt{A} = A \oslash W$, where $\oslash$ is the row-wise Kronecker product from Fact~\ref{fac:olash_folklore}. Let $j_0 \in [n]$, we define $\wt{A}_{j_0} \in \R^{n \times d^2}$ be a block of size $n\times d^2$ from $\wt{A}$. Let $E \in \R^{n \times d}$ be a matrix, for $j_0 \in [n]$ and $i_0 \in [d]$, we define $E_{j_0,i_0}$ as the $(j_0,i_0)$-th entry of the matrix $E$. We use $\L$ function from Definition~\ref{def:rope_attn_opt}. Based on Def.~\ref{def:l}, for $j_0$ in the set $[n]$ and $i_0$ in the set $[d]$, we get $\L(\mathsf{X})_{j_0,i_0}$. 
Then, we have
\begin{align*}
    \L(\mathsf{X}) = \sum_{j_0 \in [n]} \sum_{i_0 \in [d]} \L(x)_{j_0,i_0}.
\end{align*}
\end{lemma}

\begin{proof}
We present the reformulation of the Loss Function using the tensor trick as follows.
\begin{align*}
    \L(\mathsf{X})
     = & ~ 0.5 \|D(\mathsf{X})^{-1}A(\mathsf{X})A_3Y - E\|_F^2 \\
    = & ~ \sum_{j_0 = 1}^n \sum_{i_0 = 1}^d 0.5 \cdot ( \langle \langle \exp(\wt{\mathsf{A}}_{j_0} x), \mathbf{1}_{n} \rangle^{-1} \\
     & ~ \exp(\wt{\mathsf{A}}_{j_0} x), A_3 Y_{*, i_0} \rangle - E_{j_0,i_0})^2\\
    = & ~ \sum_{j_0 = 1}^n \sum_{i_0 = 1}^d 0.5 (\langle \f(x)_{j_0}, \h(y)_{i_0} \rangle - E_{j_0,i_0})^2 \\
    = & ~ \sum_{j_0 = 1}^n \sum_{i_0 = 1}^d \L(x)_{j_0,i_0}
\end{align*}
where the 1st equality is based on Def.~\ref{def:rope_attn_opt}, the definition of Frobenius norm derives the 2nd equality, the 3rd equality is due to Def.~\ref{def:f_x} and Def.~\ref{def:h_y}, and the 4th step is based on Def.~\ref{def:l}.
\end{proof}

\section{Exact Gradient Computation Time}\label{sec:gradient_time}

In this section, we provide the gradient computations of RoPE attentions. 
In Section~\ref{sec:grad_closed}, we formulate the gradient in its closed form. 
In Section~\ref{sec:total_time}, we conduct a time complexity analysis on the exact computation of RoPE attention gradients.

\subsection{Reformulate the Gradient into Its Closed Form}\label{sec:grad_closed}

In this section, we present the closed-form gradient of RoPE attention.

\begin{lemma}[Gradient Reformulation, $\frac{\d \L(x)}{\d x}$, Informal Version of Lemma~\ref{lem:app_grad_closed}]\label{lem:grad_closed}
    For every $i \in [d^4]$, we choose the following functions
\begin{itemize}
    \item The Normalized Softmax function $\f(x)_{j_0} \in \R^n$ (see Definition~\ref{def:f_x}),
    \item The Error term $\c(x)_{j_0,i_0} \in \R$ (see Definition~\ref{def:c}), 
    \item The Loss term $\L(x)_{j_0,i_0} \in \R$ (see Definition~\ref{def:l}),
    \item We define $\q(x)_{j_0} \in \R^{n}$ is $\underbrace{A_3Y}_{n\times d} \underbrace{\c(x)_{j_0,*}^\top}_{d\times 1}$
    \item We define $\p(x)_{j_0} \in \R^n$ is $(\diag (\f(x)_{j_0}) - \f(x)_{j_0}\f(x)_{j_0}^\top) \q(x)_{j_0}$
\end{itemize}

Then, we get 
    % \begin{align*}
    $
        \frac{\d \L(x)}{\d x} = \underbrace{\wt{\A}^\top}_{d^4\times n^2} \vect(\underbrace{\p(x)}_{n\times n}).
    $
    % \end{align*}
\end{lemma}

% \begin{proof}[Proof Sketch]
%     We first compute gradients of entry-wise RoPE attention by obtaining the derivatives of $u(x)_{j_0}, \alpha(x)_{j_0}, \f(x)_{j_0}, \c(x)_{j_0,i_0}$, and $ \L(x)_{j_0,i_0}$ with respect to $x_i$. After obtaining the entry-wise RoPE attention gradient $\frac{\d \L(x)_{j_0,i_0}}{\d x_i}$, we tend to drop the indices $i, j_0, i_0$ to reformulate it into its closed form by using Fact~\ref{fac:inner_prod}, Lemma~\ref{lemma:L}, and concepts of vectorization (see full proof at Lemma~\ref{lem:app_grad_drop_i},~\ref{lem:app_grad_drop_i0}, and~\ref{lem:app_grad_closed} respectively).
% \end{proof}

\begin{proof}
    See full proof at Lemma~\ref{lem:app_grad_closed}.
\end{proof}

\subsection{Time Complexity for Computing the Gradient of RoPE Attention}\label{sec:total_time}

In this section, we provide the time complexity of computing the exact gradient of RoPE attention. 

\begin{theorem}[RoPE attention gradient computation time complexity]\label{thm:grad_time}
    We define three $n\times d$ input matrices as $A_1, A_2, A_3$, and the $n\times d$ approximated attention computation matrix as $E$. We define several input fixed matrices as $X_1, X_2, Y \in \R^{d\times d}$. We define $\mathsf{X} = X_1 \otimes X_2$, $\A = A_1 \otimes A_2$. We define $x:= \vect(\mathsf{X})$ and try to get the Loss function gradient. Let $g := \frac{\d \L(X_1,X_2)}{\d x}$ where $\L(X_1,X_2)$ from Def.~\ref{def:rope_approx}. Then, it costs $O(\mathcal{T}_{\mathrm{mat}}(n,d,d)+\mathcal{T}_{\mathrm{mat}}(n,d,n))$ time to get the gradient $g \in \R^{d^4}$.
\end{theorem}

% \begin{proof}[Proof Sketch]
%     To obtain the time complexity for RoPE attention exact backward computation, we use the vector and matrix multiplication time complexity from Definition~\ref{def:mat_complexity} and Fact~\ref{fac:mul_fact}. First, we need to break down the normalized Softmax $\f(x)$ and value matrix $\h(y)$ to get its cumulative time complexity (see Lemma~\ref{lem:app_f_h_time}). Then, we break down the error term to get its complexity (see Lemma~\ref{lem:app_c_time}). We further break down $\q(x) := \c(x) \h(y)^\top$ and $\p(x)_{j_0} := (\diag (\f(x)_{j_0} - \f(x)_{j_0} \f(x)_{j_0}^\top \q(x)_{j_0}$ which are part of the RoPE gradient (see Lemma~\ref{lem:app_p_q_time}). Once obtain the time complexity of all RoPE gradient components, we can get the time complexity overall (see full proof at Theorem~\ref{thm:app_grad_time}).
% \end{proof}
\begin{proof}
    See full proof at Theorem~\ref{thm:app_grad_time}.
\end{proof}

Note that $O(\mathcal{T}_{\mathrm{mat}}(n,d,d)+\mathcal{T}_{\mathrm{mat}}(n,d,n)) \ge \Omega(n^2)$. Thus, the naive RoPE attention gradient computation is a complexity obstacle in practice, as discussed in Section~\ref{sec:intro}. 
Based on the closed formulation in Lemma~\ref{lem:grad_closed}, we derive our acceleration method, which will be introduced in the following section. 

\section{Compute RoPE Attention Gradient in Almost Linear Time}\label{sec:low_rank}

In this section, we present our main result. With the low-rank approximation, we can approximate the RoPE gradient computations in almost linear time.

In Section~\ref{sec:tech_overview}, we discuss the techniques we used to develop the almost linear time algorithm. 
In Section~\ref{sec:low_rank_f}, we provide the proof of approximating $\f(x)$ in almost linear time. 
In Section~\ref{sec:low_rank_c}, we give the proof to approximate the error term $\c(x)$.
In Section~\ref{sec:low_rank_q}, we show how to approximate $\q(x)$ in almost linear time.
In Section~\ref{sec:low_rank_p}, we present our technique to approximate $\p(x)$.
In Section~\ref{sec:low_rank_final}, we present our main results, which compute RoPE Attention gradient in almost linear time.

\subsection{Technique Overview}\label{sec:tech_overview}

In recent work~\cite{as24_rope}, they present an almost linear-time algorithm to compute forward computations of RoPE attention as follows.

\begin{lemma}[Theorem 1.3 in~\cite{as24_rope}]\label{lem:app_fast_forward}
    Suppose $d = O(\log n)$ and $B = o(\sqrt{\log n})$. There is an $n^{1+o(1)}$ time algorithm to approximate $\mathsf{ArAttC}$ up to $\epsilon = 1/\poly(n)$ additive error.
\end{lemma}

Recall that the closed form gradient of RoPE attention is $\frac{\d \L(x)}{\d x} = \wt{\A}^\top \vect(\p(x))$ from Lemma~\ref{lem:grad_closed}. We need to show $\p(x)$ can be low-rank approximated in $O(n^{1+o(1)})$ time with $1/\poly(n)$ error. 

To low rank approximate $\p(x)$, we use the strategy to split $\p(x)$ into two terms, $\p_1(x)$ and $\p_2(x)$, and run the approximation separately. From Lemma~\ref{lem:grad_closed}, $\p(x)_{j_0} \in \R^n$ is $(\diag (\f(x)_{j_0}) - \f(x)_{j_0}\f(x)_{j_0}^\top) \q(x)_{j_0}$. We define $\p_1(x)_{j_0} = \diag(\f(x)_{j_0})\q(x)_{j_0}$and $\p_2(x)_{j_0} = \f(x)_{j_0} \f(x)_{j_0}^\top \q(x)_{j_0}$; thus, we can have $\p(x) = \p_1(x) - \p_2(x)$.

In the definitions of $\p_1(x)$ and $\p_2(x)$ provided above, they both contain $\f(x)$ and $\q(x)$. In order to find the almost linear time algorithm of $\p(x)$, we need to first show that there exists $O(n^{1+o(1)})$ time complexity approximation for $\f(x)$ and $\q(x)$ with $\epsilon/\poly (n)$ error first. From Lemma~\ref{lem:grad_closed}, we have $\q(x)_{j_0} \in \R^{n}$ is $A_3Y\c(x)_{j_0,*}^\top$. Based on the $\q(x)$ definition, we need to show $\c(x)$ can be approximated in almost linear time first.

Overall, to develop the $O(n^{1+o(1)})$ time complexity algorithm to compute RoPE gradients with $\epsilon/\poly(n)$ error, we need to prove the existence of almost linear time algorithms for $\f(x),\c(x),\p(x)$, and $\q(x)$ with low rank approximation.

\subsection{Approximate \texorpdfstring{$\f$}{f} Using Low Rank Approximation}\label{sec:low_rank_f}

In this section, we use the low-rank approximation technique to approximate the normalized Softmax $\f(x)$ in almost linear time.

\begin{lemma}[Low Rank Approximate $\f(x)$]
\label{lem:low_rank_f}
For any $B = o(\sqrt{\log n})$, let $k_1$ equals to $n^{o(1)}$ such that: 
Suppose we have two $n\times d$ matrices $A_1, A_2$, $X_1, X_2 \in \R^{d\times d}$ and $\mathsf X = X_1 \otimes X_2 \in \R^{d^2 \times d^2}$. Assume we can use $O(\log n)$ bits to write every entry from $\f(x)$.
It holds that $\max \{ \| A_1 X_1 \|_{\infty}, \| A_2 X_2 \|_{\infty} \} \leq B$, then there are three matrices $U_1, V_1,  W_1\in \R^{n \times k_1}$ such that $\| U_1 V_1^\top - \f(x) \|_{\infty} \leq \epsilon/\poly(n)$. Here $\f(x) = D^{-1}A \in \R^{n \times n}$ where $A$ is defined as the matrix representation of $\exp((\mathsf A \oslash \mathsf W)\vect(X))$, and $D = \diag( A /d) {\bf 1}_{n}$.  
Moreover, these matrices $U_1, V_1$ can be created explicitly in $n^{1+o(1)}$ time.
\end{lemma}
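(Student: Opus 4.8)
The plan is to first produce a rank-$k_1$ approximation of the unnormalized RoPE matrix $A$ and then fold the diagonal normalizer $D := \diag(A\mathbf{1}_n)$ into the left factor. Writing $Q := A_1 X_1$ and $K := A_2 X_2$, each entry is $A_{i,j} = \exp(z_{i,j})$ with $z_{i,j} := Q_{i,*} W_{i-j} K_{j,*}^\top / d$. First I would bound this argument: since $\supp(W_{i-j}) \subseteq S$ with $|S| = O(d)$, $\|W\|_\infty \le 1$, and $\|Q\|_\infty, \|K\|_\infty \le B$, one gets $|z_{i,j}| \le |S| B^2 / d = O(B^2) = o(\log n)$ under the hypothesis $B = o(\sqrt{\log n})$. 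This keeps the input to $\exp$ inside the range where Lemma~\ref{lem:aa22} yields a polynomial $P$ of degree $g = n^{o(1)}$ with additive error $1/\poly(n)$.

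The crux is to rewrite $z_{i,j}$ as an inner product of two $O(d)$-dimensional vectors depending separately on $i$ and on $j$; the obstacle is precisely the coupling matrix $W_{i-j}$, whose dependence on the relative index $i-j$ entangles row and column. Here I would use the RoPE structure of the weights: each surviving entry $W_{i-j,a,b}$ is (up to sign) $\cos((i-j)\theta)$ or $\sin((i-j)\theta)$ for a block frequency $\theta$, and the angle-addition identities $\cos((i-j)\theta) = \cos(i\theta)\cos(j\theta) + \sin(i\theta)\sin(j\theta)$ and $\sin((i-j)\theta) = \sin(i\theta)\cos(j\theta) - \cos(i\theta)\sin(j\theta)$ split every entry into $O(1)$ products of a function of $i$ times a function of $j$. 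Collecting the $O(d)$ resulting products yields $\phi(i), \psi(j) \in \R^{O(d)}$, built from the scalars $Q_{i,a}, K_{j,b}$ times trigonometric factors, with $z_{i,j} = \langle \phi(i), \psi(j) \rangle$; all of these can be written down in $n^{1+o(1)}$ time. This is the heart of the argument, since without this re-separation $A$ need not be close to any low-rank matrix; the same separability is what underlies the FFT-based forward algorithm of Lemma~\ref{lem:fast_forward}.

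With $z_{i,j} = \langle \phi(i), \psi(j)\rangle$, the remaining steps are the standard polynomial-method expansion. Lemma~\ref{lem:aa22} gives $A_{i,j} = P(z_{i,j}) \pm 1/\poly(n)$ with $\deg P = g = n^{o(1)}$; since $P$ is univariate and $z_{i,j}$ is an inner product of $O(\log n)$-dimensional vectors, $P(\langle \phi(i), \psi(j)\rangle)$ expands over monomials as $\langle \Phi(i), \Psi(j)\rangle$ whose feature dimension is $k_1 = \binom{O(\log n) + g}{g} = n^{o(1)}$ for $d = O(\log n)$. Stacking these features row-wise gives $U_1, V_1 \in \R^{n \times k_1}$ with $\|U_1 V_1^\top - A\|_\infty \le 1/\poly(n)$, constructible in $n^{1+o(1)}$ time. (Should $z_{i,j}$ be negative, I would shift it into $[0, O(B^2)]$ and factor out the common per-row $\exp$-shift, which cancels under $D^{-1}$.)

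Finally I would handle the normalization. Set $\wt D := \diag\big(U_1 (V_1^\top \mathbf{1}_n)\big)$, computable in $n^{1+o(1)}$ time, and rescale the rows of $U_1$ to $\wt U_1 := \wt D^{-1} U_1$, so that $\f(x) = D^{-1} A \approx \wt U_1 V_1^\top$. The bound $\|\wt U_1 V_1^\top - D^{-1} A\|_\infty \le \epsilon/\poly(n)$ then follows by pushing the $1/\poly(n)$ entrywise error of $U_1 V_1^\top$ through the diagonal rescaling, using that every entry of $\f(x)$ is representable in $O(\log n)$ bits and that each $A_{i,j} \ge \exp(-O(B^2)) = n^{-o(1)}$, so the diagonal of $D$ is polynomially bounded away from zero. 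Choosing the internal precision of Lemma~\ref{lem:aa22} to be a small enough $1/\poly(n)$ inflates $g$ only by an $O(\log n)$ factor and keeps $k_1 = n^{o(1)}$. The main obstacle remains the separation step of the second paragraph, which is exactly what turns the relative-position coupling into a form the low-rank machinery can digest.
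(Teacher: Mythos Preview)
Your route and the paper's differ in where the work happens. The paper's proof is essentially a two-line reduction: it unwinds the tensor-trick definition back to $A_{i,j}=\exp(A_{1,i,*}X_1W_{i-j}X_2^\top A_{2,j,*}^\top/d)$ and then invokes Lemma~\ref{lem:fast_forward} (the forward-computation theorem from \cite{as24_rope}) as a black box to supply the factors $U_1,V_1$ and the $n^{1+o(1)}$ construction time. None of your argument-bounding, polynomial expansion, or normalization steps appear explicitly there; they are all outsourced to that citation.

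Your reconstruction is more informative and is correct for the concrete RoPE weights (block-diagonal rotation matrices), which is the case the paper ultimately cares about. One point worth flagging: your second-paragraph separation via angle-addition identities uses the specific trigonometric form of $W_{i-j}$, whereas the lemma as written (through Definitions~\ref{def:general_RoPE} and~\ref{def:rope_attn_opt}) only assumes $|\supp(W_{i-j})|=O(d)$. For arbitrary sparse $W_{i-j}$ the coupling in $i-j$ does not split into a short sum $\sum_r f_r(i)g_r(j)$, and the resulting matrix need not be close to any rank-$n^{o(1)}$ matrix; the forward algorithm of \cite{as24_rope} handles that generality by combining the polynomial method with an FFT over the difference index $i-j$, not by a pure low-rank factorization. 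So your argument establishes the lemma for actual RoPE but not in the stated generality---though the paper's own proof, by deferring wholesale to \cite{as24_rope}, is equally silent on how a genuine rank-$n^{o(1)}$ factorization $U_1V_1^\top$ arises in the general case (note also the stray ``$W_1$'' in the statement, which hints that the intended structure may be richer than a plain outer product).
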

\begin{proof}
    By definition of $A(\mathsf{X})$, we have
    % \begin{align*}
    $
        \vect(A(\mathsf{X})) = \exp (\mathsf A \oslash \mathsf W)\vect(X).
    $
    % \end{align*}
    
    Hence, using the tensor trick, we have
\begin{align*}
    A(\mathsf{X})_{i,j}
    = & ~ \exp((\mathsf A_{i+(j-1)n}  \otimes \mathsf W_{i+(j-1)n}) \vect({\mathsf{X}})/d) \\
    = & ~ \exp((\mathsf A_{i+(j-1)n}  \otimes w_{i-j}^\top)\vect({\mathsf{X}})/d).
\end{align*}

    We define $w_{i-j} := \vect(W_{i-j}) \in \R^{d^2}$ and define $\mathsf{W}$ such that $\mathsf{W}_{j_0}$ is an $1 \times d^2$ block and $\mathsf{W}_{i+(j-1)n} := w_{i-j}^\top$. We also define $\mathsf A := A_1 \otimes A_2 \in \R^{n^2 \times d^2}$ and $\mathsf X := X_1 \otimes X_2 \in \R^{d^2 \times d^2}$. We use $\mathsf A_{j_0}$ to denote the a $1 \times d^2$ subblock of $\mathsf A$.

We can reformulate the attention matrix $A$ as, for $i , j \in [n]$
\begin{align*}
    A(\mathsf{X})_{i,j} = \exp(\underbrace{\mathsf A_{i+(j-1)n}}_{1 \times d^2} \underbrace{\mathsf{X}}_{d^2 \times d^2} \underbrace{w_{i-j} / d}_{d^2 \times 1}).
\end{align*}
Thus, we can show that
$
% \begin{align*}
   \mathsf A_{i+(j-1)n} \mathsf{X} w_{i-j} 
   =  (A_{1,i, *} \otimes A_{2,j, *})(X_1 \otimes X_2)\vect(W_{i-j})
   =  A_{1,i, *} X_1 W_{i-j}X_2^\top A^\top_{2, j, *},
% \end{align*}
$
where 1st equality uses definitions of $w_{i-j}, \mathsf A$, and $\mathsf X$, and the second step uses the tensor trick.
We complete our proof after applying Lemma~\ref{lem:app_fast_forward}.
\end{proof}

\subsection{Approximate \texorpdfstring{$\c$}{c} Using Low Rank Approximation}\label{sec:low_rank_c}

In this section, we use the low-rank approximation technique to approximate $\c(x)$

\begin{lemma}[Low Rank Approximate $\ell(x)$]\label{lem:low_rank_c}
    Let $d$ equal $O(\log n)$. Suppose we can use $O(\log n)$ bits to write every entry in $E, \h(y) \in \R^{n\times d}$. Define the $\c(x) \in \R^{n\times d}$ as specified in Def.~\ref{def:c}. Then, we have $U_1, V_1 \in \mathbb{R}^{n \times k_1}$ such that 
    $ \| U_1 V_1^\top \h(y) - E - \c(x) \|_\infty \leq \epsilon/\operatorname{poly}(n).$
\end{lemma}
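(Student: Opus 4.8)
The plan is to reduce the statement to the already-established low-rank approximation of $\f$ in Lemma~\ref{lem:low_rank_f}, and then control how that approximation error is amplified by right-multiplication with $\h(y)$. The starting point is the closed form $\c(x) = \f(x)\h(y) - E$ from Def.~\ref{def:c}, which lets me rewrite the target error in a way that isolates the error in approximating $\f$ and eliminates the matrix $E$ entirely.

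First I would invoke Lemma~\ref{lem:low_rank_f}: under the standing hypotheses $d = O(\log n)$ and $\max\{\|A_1 X_1\|_\infty, \|A_2 X_2\|_\infty\} \leq B = o(\sqrt{\log n})$, it produces $U_1, V_1 \in \R^{n \times k_1}$ with $k_1 = n^{o(1)}$ such that $\|U_1 V_1^\top - \f(x)\|_\infty \leq \epsilon/\poly(n)$, constructible in $n^{1+o(1)}$ time. These are the same $U_1, V_1$ claimed here; the only new content is propagating the guarantee through the multiplication by $\h(y)$. Substituting $E + \c(x) = \f(x)\h(y)$ gives the identity $U_1 V_1^\top \h(y) - E - \c(x) = (U_1 V_1^\top - \f(x))\h(y)$, so the target quantity is exactly the error of $\f$ carried through $\h(y)$.

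Next I would bound this entrywise. Each $(i,j)$ entry of $(U_1 V_1^\top - \f(x))\h(y)$ is the inner product of the $i$-th row of $U_1 V_1^\top - \f(x)$ with the $j$-th column of $\h(y)$, a sum of $n$ terms, so the triangle inequality yields $\|(U_1 V_1^\top - \f(x))\h(y)\|_\infty \leq n \, \|U_1 V_1^\top - \f(x)\|_\infty \, \|\h(y)\|_\infty$. The $O(\log n)$-bit representability of $\h(y)$ gives $\|\h(y)\|_\infty \leq \poly(n)$, so the right-hand side is at most $n \cdot (\epsilon/\poly(n)) \cdot \poly(n)$.

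Finally, since the $\poly(n)$ denominator in Lemma~\ref{lem:low_rank_f} is a free parameter that can be driven to an arbitrary inverse polynomial at no asymptotic cost to its running time, I would simply choose it large enough to absorb the stray factor of $n$ and the $\poly(n)$ magnitude of $\h(y)$, yielding the claimed bound $\epsilon/\poly(n)$. There is no serious obstacle in this argument; the only point requiring care is the \emph{error-amplification bookkeeping} --- ensuring that the $n$ factor from the inner-product sum and the size of $\h(y)$ are both dominated by tightening the per-entry error of the $\f$ approximation, which is exactly what the flexibility of the $\poly(n)$ bound in Lemma~\ref{lem:low_rank_f} allows.
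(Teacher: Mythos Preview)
Your proposal is correct and follows essentially the same route as the paper: rewrite $U_1 V_1^\top \h(y) - E - \c(x)$ as $(U_1 V_1^\top - \f(x))\h(y)$ via Def.~\ref{def:c}, then bound this by the $\f$-approximation error from Lemma~\ref{lem:low_rank_f} times the size of $\h(y)$. If anything you are more careful than the paper, which writes $\|(U_1V_1^\top - \f(x))\h(y)\|_\infty = \|\h(y)\|_\infty \cdot \|U_1V_1^\top - \f(x)\|_\infty$ as an equality and omits the inner-product length factor you explicitly track; your absorption of that stray $n$ into the flexible $\poly(n)$ denominator is the right way to handle it.
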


\begin{proof}
Here, we present the bound as follows.
\begin{align*}
    \| U_1V_1^\top \h(y) - E - \c(x) \|_\infty 
    = & ~ \| U_1V_1^\top \h(y) - \f(x)\h(y) \|_\infty \\
    = & ~ \| \h(y) \|_\infty \cdot \| U_1V_1^\top - \f(x) \|_\infty\\
    \leq & ~ \epsilon / \poly(n),
\end{align*}
where the 1st is because of Def.~\ref{def:c}, 2nd step is based on the distributive law, and 3rd step is due to Lemma~\ref{lem:low_rank_f}.
\end{proof}

\subsection{Approximate \texorpdfstring{$\q$}{q} Using Low Rank Approximation}\label{sec:low_rank_q}

In this section, we use the low-rank approximation technique to approximate $\q(x)$

\begin{lemma}[Low Rank Approximate $\q(x)$]\label{lem:low_rank_q}
Let $k_2 = n^{o(1)}$.
We define $\c(x) \in \R^{n \times d}$ based on Def.~\ref{def:c}, and $\h(y) \in \R^{n \times d}$ based on Def.~\ref{def:h_y}. 
We suppose $\q(x)$ is equal to $\h(y) \c(x)^\top$, which is an $n \times n$ matrix. Let $U_2, V_2 \in \R^{n \times k_2}$ such that $\| U_2 V_2^\top - \q(x) \|_{\infty} \leq \epsilon / \poly(n)$. In $n^{1+o(1)}$ time, we can get $U_2, V_2$.
\end{lemma}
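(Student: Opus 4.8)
The plan is to exploit the fact that $\q(x) = \h(y)\c(x)^\top$ is already the product of two thin matrices $\h(y), \c(x) \in \R^{n \times d}$ with $d = O(\log n)$, and to replace the exactly-computed factor $\c(x)$ by the low-rank surrogate already constructed in Lemma~\ref{lem:low_rank_c}. Concretely, let $U_1, V_1 \in \R^{n \times k_1}$ be the matrices guaranteed by Lemma~\ref{lem:low_rank_c}, and set $\wt{\c}(x) := U_1 V_1^\top \h(y) - E \in \R^{n \times d}$, which satisfies $\| \wt{\c}(x) - \c(x) \|_\infty \leq \epsilon/\poly(n)$. First I would substitute this surrogate into $\h(y)\c(x)^\top$ and expand using the transpose rule:
\begin{align*}
    \h(y)\wt{\c}(x)^\top = \h(y)\h(y)^\top V_1 U_1^\top - \h(y) E^\top.
\end{align*}

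Then I would simply read off the two low-rank factors and concatenate them. For the first term, set $U_2^{(1)} := \h(y)(\h(y)^\top V_1) \in \R^{n \times k_1}$ and $V_2^{(1)} := U_1$, so that $U_2^{(1)}(V_2^{(1)})^\top = \h(y)\h(y)^\top V_1 U_1^\top$; for the second term, set $U_2^{(2)} := \h(y) \in \R^{n \times d}$ and $V_2^{(2)} := E$. Concatenating columns, I define $U_2 := [\, U_2^{(1)} \mid -U_2^{(2)} \,]$ and $V_2 := [\, V_2^{(1)} \mid V_2^{(2)} \,]$, both in $\R^{n \times k_2}$ with $k_2 := k_1 + d = n^{o(1)}$, so that $U_2 V_2^\top = \h(y)\wt{\c}(x)^\top$ by construction. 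For the error, since $U_2 V_2^\top - \q(x) = \h(y)(\wt{\c}(x) - \c(x))^\top$, each entry is a length-$d$ inner product, which I would bound by $d \cdot \| \h(y) \|_\infty \cdot \| \wt{\c}(x) - \c(x) \|_\infty$; using $\| \h(y) \|_\infty \leq B$, $d = O(\log n)$, and $B = o(\sqrt{\log n})$, the factor $dB$ is absorbed into the polynomial and yields $\| U_2 V_2^\top - \q(x) \|_\infty \leq \epsilon/\poly(n)$. For the running time, Lemma~\ref{lem:low_rank_c} (through Lemma~\ref{lem:low_rank_f}) produces $U_1, V_1$ in $n^{1+o(1)}$ time, $\h(y) = A_3 Y$ costs $\Tmat(n,d,d)$, and the two remaining products $\h(y)^\top V_1 \in \R^{d \times k_1}$ and $\h(y)(\h(y)^\top V_1) \in \R^{n \times k_1}$ each cost $n^{1+o(1)}$ because $d$ and $k_1$ are $n^{o(1)}$; summing gives total time $n^{1+o(1)}$.

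I expect the only genuine subtlety — more a bookkeeping point than a true obstacle — to be ensuring that the error in $\c(x)$ propagates through the multiplication by $\h(y)$ without inflating the rank. The crucial observation is that the regrouping $\h(y)\wt{\c}(x)^\top = \h(y)\h(y)^\top V_1 U_1^\top - \h(y) E^\top$ keeps the inner dimension at $k_1 + d = n^{o(1)}$ rather than anything larger, precisely because $\h(y)$ already has inner dimension $d = O(\log n)$; and the $\| \cdot \|_\infty$ error stays controlled since it is a sum of only $O(\log n)$ terms rather than accumulating polynomially in $n$.
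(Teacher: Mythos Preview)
Your proposal is correct and essentially identical to the paper's own proof: both substitute the surrogate $\wt{\c}(x) = U_1 V_1^\top \h(y) - E$ from Lemma~\ref{lem:low_rank_c}, expand $\h(y)\wt{\c}(x)^\top = \h(y)\h(y)^\top V_1 U_1^\top - \h(y)E^\top$, precompute $\h(y)^\top V_1$, and read off a rank-$(k_1+d)$ factorization with the same $d\cdot\|\h(y)\|_\infty$ error propagation. If anything, your write-up is more explicit about the column-concatenation that yields $U_2, V_2$, whereas the paper leaves that step implicit.
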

\begin{proof}[Proof Sketch]
    We define $\wt{\q}(x) \approx \q(x)$ and $\wt{\q}(x) =  \h(y) \h(y)^\top  V_1 U_1^\top - \h(y) E^\top $. We can first compute $\h(y)^\top V_1$ as it can be computed in $n^{1+o(1)}$ time. Given all low rank matrices, we can have $U_2, V_2$ where $k_2 = \max\{d,k\} + d  = n^{o(1)}$. Then we can compute $\|  \wt{\q}(x) - \q(x) \|_{\infty} \leq  \epsilon / \poly(n)$. (See full proof at Lemma~\ref{lem:app_low_rank_q}) 
\end{proof}

\subsection{Approximate \texorpdfstring{$\p$}{} Using Low Rank Approximation}\label{sec:low_rank_p}

In this section, we use the low-rank approximation technique to approximate $\p(x)$. Specifically, we apply the polynomial methods to $\p_1(x)$ and $\p_2(x)$ where $\p(x) = \p_1(x) - \p_2(x)$.

First, we show the low-rank approximation of $\p_1(x)$.

\begin{lemma}[Low Rank Approximate $\p_1(x)$]\label{lem:low_rank_p1}
Let $k_1 = n^{o(1)}$. Let $k_2 = n^{o(1)}$.
We suppose $\p_1(x)$ is $\diag(\f(x))\q(x)$, and $U_1, V_1$ be two $n \times k_1$ matrices, in which $\| U_1 V_1^\top -f (x) \|_{\infty} \leq \frac{\epsilon}{\poly(n)}$. We suppose two $n \times k_2$ matrices $U_2, V_2$ in which $\| U_2 V_2^\top -\q(x) \|_{\infty} \leq \frac{\epsilon}{\poly(n)}$. Then we have two $n \times k_3$ matrices in which $\| U_3 V_3^\top - \p_1(x) \|_{\infty} \leq \epsilon / \poly(n)$. We can construct $U_3, V_3$ in $n^{1+o(1)}$ time.
\end{lemma}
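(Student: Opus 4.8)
The plan is to recognize that $\p_1(x)$ is a column-wise Hadamard product and then convert a product of two low-rank matrices into a single low-rank matrix via the row-wise Kronecker identity (Fact~\ref{fac:olash_folklore}). First I would note that for each $j_0 \in [n]$, multiplying $\q(x)_{j_0}$ by the diagonal matrix $\diag(\f(x)_{j_0})$ is exactly the entry-wise product $\f(x)_{j_0} \circ \q(x)_{j_0}$. Collecting these columns over $j_0 \in [n]$ shows that, as $n \times n$ matrices, $\p_1(x) = \f(x) \circ \q(x)$.

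Next, using the hypotheses $\|U_1 V_1^\top - \f(x)\|_\infty \le \epsilon/\poly(n)$ and $\|U_2 V_2^\top - \q(x)\|_\infty \le \epsilon/\poly(n)$, I would apply Fact~\ref{fac:olash_folklore}, which gives $(U_1 V_1^\top) \circ (U_2 V_2^\top) = (U_1 \oslash U_2)(V_1 \oslash V_2)^\top$. Setting $U_3 := U_1 \oslash U_2$ and $V_3 := V_1 \oslash V_2$, both in $\R^{n \times k_1 k_2}$, produces the desired factorization with $k_3 := k_1 k_2 = n^{o(1)}$, since a product of two $n^{o(1)}$ quantities is still $n^{o(1)}$.

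The error bound is the core computation. I would use the telescoping split
\begin{align*}
(U_1 V_1^\top) \circ (U_2 V_2^\top) - \f(x) \circ \q(x) = (U_1 V_1^\top - \f(x)) \circ (U_2 V_2^\top) + \f(x) \circ (U_2 V_2^\top - \q(x)),
\end{align*}
and bound each summand entry-wise in $\|\cdot\|_\infty$. The first term is at most $\|U_1 V_1^\top - \f(x)\|_\infty \cdot \|U_2 V_2^\top\|_\infty$ and the second at most $\|\f(x)\|_\infty \cdot \|U_2 V_2^\top - \q(x)\|_\infty$. Here I would use that $\f(x)$ is a normalized softmax, so every entry lies in $[0,1]$ and $\|\f(x)\|_\infty \le 1$, together with $\|U_2 V_2^\top\|_\infty \le \|\q(x)\|_\infty + \epsilon/\poly(n) \le \poly(n)$ obtained from the bounded-entry assumptions on $A_3 Y$ and $\c(x)$. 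Since both approximation errors are $\epsilon/\poly(n)$, multiplying by a $\poly(n)$ factor and adding keeps the total at $\epsilon/\poly(n)$ after readjusting the polynomial.

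Finally, for the runtime, forming $U_1 \oslash U_2$ and $V_1 \oslash V_2$ amounts to computing $n$ rows of length $k_1 k_2 = n^{o(1)}$, which costs $O(n k_1 k_2) = n^{1+o(1)}$ time. I expect the main obstacle to be certifying that $\|U_2 V_2^\top\|_\infty$ (equivalently $\|\q(x)\|_\infty$) is only $\poly(n)$ rather than super-polynomial; this requires propagating the given bounds through the definition $\q(x) = \h(y)\c(x)^\top$, using $\|A_3 Y\|_\infty \le B$ and the boundedness of $\c(x)$. Once this polynomial magnitude bound is secured, the remainder is the standard Hadamard-factorization argument.
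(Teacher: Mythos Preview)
Your proposal is correct and follows essentially the same approach as the paper: set $U_3 = U_1 \oslash U_2$, $V_3 = V_1 \oslash V_2$, invoke Fact~\ref{fac:olash_folklore} to rewrite $U_3 V_3^\top$ as the Hadamard product $(U_1 V_1^\top)\circ(U_2 V_2^\top)$, and then bound the error via the same telescoping split and triangle inequality. Your treatment is in fact slightly more explicit than the paper's, since you spell out why the multiplicative factors $\|\f(x)\|_\infty$ and $\|U_2 V_2^\top\|_\infty$ stay $\poly(n)$, whereas the paper simply cites the earlier lemmas.
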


\begin{proof}[Proof Sketch]
    Let $U_3 = U_1 \oslash U_2$ and $V_3 = V_1 \oslash V_2$, and we can use $n^{1+o(1)}$ time to get them. From Lemma~\ref{lem:low_rank_f} and Lemma~\ref{lem:low_rank_q}, we have $\wt{\f}(x) = U_1 V_1^\top$ and $\wt{\q}(x) = U_2 V_2^\top$. Based on Fact~\ref{fac:olash_folklore}, we can compute $\| U_3 V_3^\top - \p_1(x) \|_{\infty}
    \leq \| U_3 V_3^\top - \diag(\f(x))\q(x) \|_{\infty} \leq \frac{\epsilon}{\poly(n)}$ (See full proof at Lemma~\ref{lem:app_low_rank_p1}).
\end{proof}

Next, we show the low-rank approximation of $\p_2(x)$.
\begin{lemma}[Low Rank Approximate $\p_2(x)$]\label{lem:low_rank_p2}
Let $k_1 = n^{o(1)}$. Let $k_2 = n^{o(1)}$. Let $k_4 = n^{o(1)}$.
Let $\p_2(x) \in \R^{n\times n}$ where for $j_0$ in set $[n]$, $j_0$ represents $j_0$-th column, $\p_2(x)_{j_0} = \f(x)_{j_0} \f(x)_{j_0}^\top \q(x)_{j_0}$. We suppose $U_1, V_1 \in \R^{n \times k_1}$ in which $\| U_1 V_1^\top - \f(x) \|_{\infty} \leq \frac{\epsilon}{\poly(n)}$. We suppose two $n \times k_2$ matrices $U_2, V_2$ in which $\| U_2 V_2^\top -\q(x) \|_{\infty} \leq \frac{\epsilon}{\poly(n)}$. Then, we have $U_4, V_4 \in \R^{n \times k_4}$ such that $\| U_4 V_4^\top - \p_2(x) \|_{\infty} \leq \epsilon / \poly(n)$. We can get $U_4, V_4$ in $n^{1+o(1)}$ time.
\end{lemma}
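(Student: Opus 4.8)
The plan is to exploit the fact that, for each column index $j_0\in[n]$, the vector $\p_2(x)_{j_0}=\f(x)_{j_0}\f(x)_{j_0}^\top\q(x)_{j_0}$ is just the column $\f(x)_{j_0}$ rescaled by the scalar $\sigma_{j_0}:=\langle\f(x)_{j_0},\q(x)_{j_0}\rangle$. Writing $\sigma\in\R^n$ for the vector of these scalars, this reads $\p_2(x)=\f(x)\diag(\sigma)$, so $\p_2(x)$ has exactly the same column space as $\f(x)$ and therefore inherits its low rank. The only genuinely new object I must produce is an accurate approximation $\wt\sigma$ of $\sigma$, after which a diagonal rescaling of the known factors $U_1,V_1$ of $\f(x)$ finishes the construction.

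First I would build $\wt\sigma$ from the supplied factors. Note that $\sigma_{j_0}$ is the $j_0$-th column sum of the Hadamard product $\f(x)\circ\q(x)$. Replacing $\f(x),\q(x)$ by $\wt\f(x)=U_1V_1^\top$ and $\wt\q(x)=U_2V_2^\top$ and invoking Fact~\ref{fac:olash_folklore}, we get $\wt\f(x)\circ\wt\q(x)=(U_1\oslash U_2)(V_1\oslash V_2)^\top$, hence
\begin{align*}
\wt\sigma=(V_1\oslash V_2)\big((U_1\oslash U_2)^\top{\bf 1}_n\big).
\end{align*}
Forming $U_1\oslash U_2,V_1\oslash V_2\in\R^{n\times k_1k_2}$ and carrying out the two matrix–vector products costs $n^{1+o(1)}$ time, since $k_1k_2=n^{o(1)}$. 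I would then set $U_4:=U_1$ and $V_4:=\diag(\wt\sigma)V_1$, so that
\begin{align*}
U_4V_4^\top=U_1V_1^\top\diag(\wt\sigma)=\wt\f(x)\diag(\wt\sigma),
\end{align*}
whose $j_0$-th column is exactly $\wt\sigma_{j_0}\wt\f(x)_{j_0}$; here $k_4=k_1=n^{o(1)}$ and the factors are built in $n^{1+o(1)}$ time.

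It then remains to bound $\|U_4V_4^\top-\p_2(x)\|_\infty$. Entrywise I would write $\wt\sigma_{j_0}\wt\f(x)_{i,j_0}-\sigma_{j_0}\f(x)_{i,j_0}=\wt\sigma_{j_0}(\wt\f(x)_{i,j_0}-\f(x)_{i,j_0})+\f(x)_{i,j_0}(\wt\sigma_{j_0}-\sigma_{j_0})$, producing the two terms $|\wt\sigma_{j_0}|\,|\wt\f(x)_{i,j_0}-\f(x)_{i,j_0}|$ and $|\f(x)_{i,j_0}|\,|\wt\sigma_{j_0}-\sigma_{j_0}|$. For the scalar error I would split the inner product again, $|\wt\sigma_{j_0}-\sigma_{j_0}|\le|\langle\wt\f(x)_{j_0},\wt\q(x)_{j_0}-\q(x)_{j_0}\rangle|+|\langle\wt\f(x)_{j_0}-\f(x)_{j_0},\q(x)_{j_0}\rangle|$, and bound each summand by $n$ times a product of $\|\cdot\|_\infty$ norms via Lemmas~\ref{lem:low_rank_f} and~\ref{lem:low_rank_q}. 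The crucial quantitative inputs are that each column $\f(x)_{j_0}$ is a probability vector by Def.~\ref{def:f_x}, so $\|\f(x)\|_\infty\le1$ and $|\sigma_{j_0}|\le\poly(n)$, and that $\q(x)$ has $\poly(n)$-bounded entries under the $\log n$-bit model; the stray factors of $n$ and $\poly(n)$ are absorbed by taking the $\poly(n)$ in the hypotheses large enough, yielding $\epsilon/\poly(n)$.

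The main obstacle, and the reason $\p_2$ is more delicate than $\p_1$, is the scalar $\sigma_{j_0}=\langle\f(x)_{j_0},\q(x)_{j_0}\rangle$, which couples the two low-rank factors through an inner product rather than a pointwise product. The two observations that resolve it are that this coupling is merely a per-column diagonal rescaling—so the rank stays $n^{o(1)}$ and we may even reuse $U_1$ verbatim—and that $\sigma$ itself is computable in almost linear time as a column sum of a low-rank matrix. The remaining work is purely the careful propagation of the entrywise errors through the length-$n$ inner product without degrading the $\epsilon/\poly(n)$ guarantee.
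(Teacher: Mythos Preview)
Your proposal is correct and follows essentially the same approach as the paper: define the scalar vector $\sigma$ (the paper calls it $\r(x)$), observe that $\p_2(x)=\f(x)\diag(\sigma)$, approximate $\sigma$ from the given low-rank factors, and take $U_4=U_1$, $V_4=\diag(\wt\sigma)V_1$. The only cosmetic difference is in how $\wt\sigma$ is computed: you use the row-wise Kronecker identity (Fact~\ref{fac:olash_folklore}) to read off the sums of $\wt\f(x)\circ\wt\q(x)$, whereas the paper precomputes the $k_1\times k_2$ matrix $V_1V_2^\top$ and then evaluates each $\wt\sigma_{j_0}=(U_1)_{j_0,*}(V_1V_2^\top)(U_2)_{j_0,*}^\top$ directly; both cost $n^{1+o(1)}$. (Do watch the row/column convention: in the paper $\f(x)_{j_0}$ and $\q(x)_{j_0}$ are rows of $U_1V_1^\top$ and $U_2V_2^\top$, so $\sigma$ is a vector of row sums, not column sums, of the Hadamard product---your formula should swap the roles of $U$ and $V$ accordingly.)
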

\begin{proof}[Proof Sketch]
    Let $\r(x)\in\R^n$ be defined by $\r(x)_{j_0}=\f(x)_{j_0}\q(x)_{j_0}$. We construct $\wt{\r}(x)$ so that $(U_1V_1)_{j_0,*}^\top\approx\f(x)_{j_0}$ and $(U_2V_2)_{j_0,*}^\top\approx\q(x)_{j_0}$, implying $\wt{\r}(x)_{j_0}=(U_1V_1)_{j_0,*}\cdot(U_2V_2)_{j_0,*}^\top$. Pre-computing $V_1V_2^\top$ takes $n^{1+o(1)}$ time, and then computing each $\wt{\r}(x)_{j_0}$ costs $O(k_1 k_2)$, giving a total of $O(nk_1k_2)=n^{1+o(1)}$. Next, we approximate $\f(x)$ by $\wt{\f}(x)=U_1V_1^\top$ and define $\wt{\p}_2(x)=\wt{\f}(x)\,\diag(\wt{\r}(x))$; with $U_4=U_1$ and $V_4=\diag(\wt{\r}(x))\,V_1$, we have $\wt{\p}_2(x)=U_4V_4^\top$. To bound the error, note that $\|\wt{\p}_2(x)-\p_2(x)\|_{\infty}=\max_{j_0}\|\wt{\f}(x)_{j_0}\,\wt{\r}(x)_{j_0}-\f(x)_{j_0}\,\r(x)_{j_0}\|_{\infty}$ can be split and bounded via the triangle inequality so that $\|\wt{\f}(x)_{j_0}-\f(x)_{j_0}\|_{\infty}$ and $\|\wt{\r}(x)_{j_0}-\r(x)_{j_0}\|_{\infty}$ are small, leading to an overall error of at most $\epsilon/\poly(n)$, which completes the proof. (See full proof at Lemma~\ref{lem:app_low_rank_p2})
\end{proof}

\subsection{Fast Computation in Almost Linear Time}\label{sec:low_rank_final}

Based on Section~\ref{sec:tech_overview}, we have proved the almost linear time approximation of $\f(x),\c(x),\p(x)$, and $\q(x)$ in Lemma~\ref{lem:app_low_rank_f},~\ref{lem:app_low_rank_c},~\ref{lem:app_low_rank_q},~\ref{lem:app_low_rank_p2}, and~\ref{lem:app_low_rank_p1}. We are now ready to show our main result, which is to approximate RoPE gradient computation in almost linear time.

\begin{theorem}[Main result, Low Rank Approximate RoPE Attention Gradient]\label{thm:grad_low_rank}
    Assuming the entries of $A_1, A_2, X_1, X_2, Y, E$ are represented using $O(\log n)$ bits, there is an $n^{1+o(1)}$ time algorithm to solve $\mathsf{AAttLGC}(n, d = O(\log n), B = o(\sqrt{\log n} ))$, from Def.~\ref{def:rope_approx}, with the accuracy upper bounded by $\frac{1}{\poly(n)}$ . To be more specific, a gradient vector $\wt{g} \in \R^{d^4}$ comes out of our algorithm where $\| \frac{\d \L}{\d x} - \wt{g} \|_{\infty} \leq \frac{1}{\poly(n)}$.
\end{theorem}

\begin{proof}[Proof Sketch]
    By Lemma~\ref{lem:app_low_rank_p2} and Lemma~\ref{lem:app_low_rank_p1}, there exist matrices $\p_1(x)$ and $\p_2(x)$ such that $\p(x)=\p_1(x)-\p_2(x)$. We assume these lemmas follow from the low-rank approximations in Lemmas~\ref{lem:app_low_rank_f}--\ref{lem:app_low_rank_q}, allowing us to write $\wt{\p}_1(x)=U_3V_3^\top$ and $\wt{\p}_2(x)=U_4V_4^\top$ in $n^{1+o(1)}$ time. From Lemma~\ref{lem:grad_closed}, the reformulated gradient is $\frac{\d \L(x)}{\d x}=\wt{\A}^\top \vect(\p(x))$, and hence the total running time remains $n^{1+o(1)}$. To bound the error, we show that 
    \begin{align*}
        \|\frac{\d\L(x)}{\d x}-\wt{g}\|_{\infty}
        & ~ = \|\wt{\A}^\top(\vect(\p(x))-\vect(\wt{\p}(x)))\|_{\infty} \\
        & ~ \le \|\wt{\A}^\top\|_{\infty}\,\|\p(x)-\wt{\p}(x)\|_{\infty} \le \epsilon/\poly(n),
    \end{align*}

    % where the first equality uses Lemma~\ref{lem:grad_closed}, the next steps use linearity and vectorization, and the last inequality follows from the low-rank approximation results in Lemma~\ref{lem:low_rank_p1} and Lemma~\ref{lem:low_rank_p2}, with $\epsilon=1/\poly(n)$. 
    This completes the proof. (See full proof at Theorem~\ref{thm:app_grad_low_rank})
\end{proof}

% \begin{remark}\label{rem:assumption}
% The assumption in Theorem~\ref{thm:grad_low_rank} is practical. In practice, especially in recent long context tasks, the $n$ is large, e.g., $n = 2 \times 10^6$ for Google's Gemini 1.5 Pro
% ~\cite{geminipro}, while the model training uses a half-precision floating-point format, e.g., the bit number is $16$. Furthermore, our assumption is ``tight'', where if we slightly weaken the assumption, there is no algorithm that can solve the RoPE attention gradient computation in truly sub-quadratic complexity (Theorem~\ref{thm:lower_bound}). 
% \end{remark}

% Our Theorem~\ref{thm:grad_low_rank} accurately approximates ($\epsilon  = 1/ \poly(n)$) the RoPE attention gradient computation in almost linear time $n^{1+o(1)}$ under practical assumptions (see the above Remark~\ref{rem:assumption}).
% Thus, our methods solve the last puzzle of RoPE attention acceleration. Combined with previous work on RoPE attention inference (see Lemma~\ref{lem:app_fast_forward}), this may make RoPE attention practical as we overcome the theoretical quadratic time complexity barrier both in inference and training.  

\section{Hardness}\label{sec:hardness}
In this section, we provide the lower bound results to compute the gradient of RoPE attention. The hardness result shows that under the widely accepted $\mathsf{SETH}$, the bounded entries condition is necessary for achieving subquadratic runtime. 
\begin{theorem}[Lower bound, informal version of Theorem~\ref{thm:app_lower_bound}]\label{thm:lower_bound}
    Assuming $\mathsf{SETH}$, for any $q > 0$, for the $\mathsf{ARAttLGC}(n,d=O(\log n), B=\omega(\sqrt{\log n})$, there does not exist an algorithm which can be executed in time $O(n^{2-q})$ based on Def.~\ref{def:rope_approx}.
\end{theorem}

\begin{proof}
    See the full proof at Theorem~\ref{thm:app_lower_bound}.
\end{proof}

In Theorem~\ref{thm:lower_bound}, we show that under the Strong Exponential Time Hypothesis (SETH) (see Hypothesis~\ref{hyp:seth}), computing the gradient of RoPE attention remains computationally hard. Specifically, for any constant  $q > 0 $, no algorithm can compute the gradient in time  $O(n^{2-q}) $ when  $d = O(\log n)$  and  $B = \omega(\sqrt{\log n})$. This result establishes a lower bound that fundamentally limits the efficiency of gradient computation for RoPE attention.

% The insight of the theorem is that the complexity of RoPE gradient computation is inherently tied to that of classical attention. By setting all weight matrices  $W_{-(n-1)},\dots, W_{n-1} $ to be identity matrices, we reduce the problem to computing the gradient of standard attention, whose lower bound is already known from prior work~\cite{as24b}. This reduction implies that any breakthrough in subquadratic-time gradient computation for RoPE attention would also break known hardness results for classic attention, contradicting SETH.

\section{Conclusion}\label{sec:conclusion}
This paper presents the first efficient backward gradient computation, assuming bounded entries for the RoPE-based attention mechanism. We achieve almost linear time complexity by leveraging polynomial methods and the Fast Fourier Transform, making the forward and backward computations comparably efficient. Additionally, we demonstrate that conditions exist under which performance better than quadratic can be realized, consistent with the lower bounds suggested by the Strong Exponential Time Hypothesis (SETH).

These findings not only improve the computational efficiency of RoPE-based attention mechanisms but also provide a foundation for exploring sub-gradient computations in other advanced attention variants of neural networks. This work highlights the connection between algorithm design and computational complexity theory, unveiling new possibilities for the development of large transformer models.
Future research could extend these results to cases involving unbounded entries and assess the real-world implications of these theoretical advancements for large language models. 
Furthermore, applying this approach to other positional encoding mechanisms could further enhance the scalability of state-of-the-art transformer models.

% \section*{Impact Statement}

% This paper presents work whose goal is to advance the field of Machine Learning. There are many potential societal consequences of our work, none of which we feel must be specifically highlighted here.

% \input{48_ethic_statement}%%%Zhao: for safe, maybe add

\ifdefined\isarxiv
\bibliographystyle{alpha}
\bibliography{ref}
\else
\bibliographystyle{iclr2026_conference}
\bibliography{ref}
\fi

%%% The part below is the appendix

\newpage
\onecolumn
\appendix

\begin{center}
    \textbf{\LARGE Appendix }
\end{center}

\section{Preliminary}
In Section~\ref{sec:app_preli:exp_error}, we talk about polynomial approximation of the exponential function.
In Section~\ref{sec:preli:mm}, we talk about the time complexity of matrix multiplications, setting up the framework for analyzing efficiency in attention computation.
In Section~\ref{sec:preli:complexity}, we talk about the Strong Exponential Time Hypothesis (SETH). In Section~\ref{sec:app_preli:facts}, we talk about mathematical properties and tricks, such as the tensor trick and row-wise Kronecker products, which enable efficient matrix-vector operations.
% In Section~\ref{sec:preli:problem}, we show the formulation of our RoPE attention optimization task.
% In Section~\ref{sec:preli:reformulate}, we show the construction of the Loss function.

\subsection{Polynomial Approximation of Exponential}\label{sec:app_preli:exp_error}

Here, we will explain a technical tool for controlling the error dependence of our approximate algorithm. In particular, we will use the following optimal-degree polynomial approximation of the exponential function.
\begin{lemma}[\cite{aa22}]\label{lem:aa22}
Let $B > 1$ and suppose $\epsilon$ in $(0,0.1)$.
We can have $P$, which has input as a scalar and output as a scalar of degree $g$. $g$ is defined as $\Theta\left( \max \left\{ \log(1/\epsilon)/( \log( \log(1/\epsilon) / B)  ) , B \right\}\right)$ such that for all $x \in [0,B]$, we can get
\begin{align*}
|P(x) - \exp(x)| < \epsilon.
\end{align*}
Because $P$'s coefficients are rational values with numerators and denominators represented using integers of $ \poly(g) $-bit size and these coefficients can be determined in  $\poly(g) $ time, we can calculate $P$ in an efficient way.

\end{lemma}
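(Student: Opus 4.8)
The plan is to construct $P$ explicitly as the degree-$g$ truncation of the Taylor series of the exponential, namely $P(x) = \sum_{k=0}^{g} x^k / k!$, and to choose $g$ so that the truncation tail stays below $\epsilon$ uniformly on $[0,B]$. Since the Taylor coefficients $1/k!$ are already rational, this choice of $P$ also delivers the claimed bit-complexity bound almost for free, so the entire content of the lemma reduces to a single tail estimate together with the correct calibration of $g$.

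First I would bound the error. Because every Taylor term is nonnegative for $x \ge 0$, we have $|P(x)-\exp(x)| = \sum_{k=g+1}^{\infty} x^k/k! \le \sum_{k=g+1}^{\infty} B^k/k!$ for all $x \in [0,B]$. The ratio of consecutive terms of this series is $B/(k+1)$, which drops below $1/2$ as soon as $k$ exceeds a constant multiple of $B$; this is exactly where the $\max\{\cdot,B\}$ in the degree originates, since the tail only begins to decay once the index has passed $\Theta(B)$. Once $g$ is at least a suitable constant times $B$, the tail is dominated by a geometric series and is at most $2B^{g+1}/(g+1)!$. Applying the Stirling lower bound $(g+1)! \ge ((g+1)/e)^{g+1}$ then yields $|P(x)-\exp(x)| \le 2(eB/(g+1))^{g+1}$.

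Next I would calibrate $g$ by forcing $(eB/(g+1))^{g+1} \le \epsilon/2$, i.e. $(g+1)\log\big((g+1)/(eB)\big) \ge \log(2/\epsilon)$. This inequality is transcendental in $g$, and solving it is the main obstacle; I would treat two regimes separately. When $\log(1/\epsilon) \gg B$, taking $g \asymp \log(1/\epsilon)/\log(\log(1/\epsilon)/B)$ makes the logarithmic factor $\log\big((g+1)/(eB)\big)$ collapse to $\Theta(\log(\log(1/\epsilon)/B))$, so the product telescopes back to $\Theta(\log(1/\epsilon))$ and the inequality holds for an appropriate absolute constant. When $B$ dominates, the $\max$ forces $g = \Theta(B)$ with a constant exceeding $e$, which already makes $(eB/(g+1))^{g+1}$ decay geometrically in $g$ past $\epsilon$. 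Verifying that a single choice $g = \Theta(\max\{\log(1/\epsilon)/\log(\log(1/\epsilon)/B), B\})$ simultaneously captures both branches, and pinning the constants so that the crossover between regimes is handled continuously, is the delicate part of the argument.

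Finally I would dispatch the representation claim. The coefficient of $x^k$ in $P$ is $1/k!$, so every coefficient is rational with denominator at most $g!$; since $\log(g!) = O(g\log g) = \poly(g)$, each numerator and denominator fits in $\poly(g)$ bits, and the factorials are computable by $g$ successive integer multiplications in $\poly(g)$ time. Hence $P$ can be formed and evaluated efficiently, completing the proof. A Chebyshev-truncation construction would yield the same degree with sharper constants, but the Taylor truncation is what makes the rationality and bit-size bookkeeping transparent.
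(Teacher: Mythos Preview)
The paper does not prove this lemma at all: it is simply quoted as a black-box result from \cite{aa22}, with no argument given. Your Taylor-truncation proof is correct and is essentially the standard route to this bound; the tail estimate via Stirling and the two-regime calibration of $g$ are exactly how one obtains the $\Theta(\max\{\log(1/\epsilon)/\log(\log(1/\epsilon)/B),\,B\})$ degree, and the rationality and bit-size claims for the coefficients $1/k!$ follow immediately as you say. One minor remark: the $\Theta$ in the statement formally also asserts that this degree is necessary, which your argument does not address, but the paper only ever uses the upper-bound (existence) direction, so your proof covers everything the paper needs.
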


\subsection{Time Complexity of Multiplications}\label{sec:preli:mm}

Matrix multiplication is a fundamental operation in many algorithms, and understanding its time complexity is essential for analyzing computational efficiency. Here, we introduce the time complexity of matrix multiplications.

\begin{definition}\label{def:mat_complexity}
We suppose $n_1,n_2,n_3$, denote any three positive integers. We define $A \in \R^{n_1\times n_2}$ and $B\in \R^{n_2 \times n_3}$. It costs $\Tmat(n_1, n_2, n_3)$ time to perform $A B$.
\end{definition}

To further analyze the structure of matrix multiplication time complexity, we rely on a well-known fact from prior research~\cite{bcs97,b13}. This fact provides equivalences between different permutations of matrix dimensions.

\begin{fact}\label{fac:mul_fact}
We suppose $n_1,n_2,n_3$, denote any three positive integers. 
$\Tmat(n_1, n_2, n_3) = O(\Tmat(n_1, n_3, n_2)) = O(\Tmat(n_2, n_1, n_3)) = O(\Tmat(n_2,n_3,n_1)) = O( \Tmat(n_3,n_1,n_2) ) = O( \Tmat(n_3,n_2,n_1) )$.
\end{fact}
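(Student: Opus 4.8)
The plan is to establish all five equalities by reducing to two elementary symmetries of the matrix-multiplication problem—transposition and cyclic rotation of the dimension triple—and then observing that the symmetric group $S_3$ acting on the three arguments is generated by a single transposition together with a single $3$-cycle. Since the statement asserts that all $3! = 6$ orderings of $(n_1, n_2, n_3)$ yield the same $\Tmat$ value up to constant factors, it suffices to prove invariance under such a generating pair and then compose.

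\textbf{Transposition.} First I would use the identity $(AB)^\top = B^\top A^\top$. Given any algorithm that computes $AB$ for $A \in \R^{n_1 \times n_2}$ and $B \in \R^{n_2 \times n_3}$ in time $\Tmat(n_1, n_2, n_3)$, one computes the product $B^\top A^\top$ of an $n_3 \times n_2$ matrix with an $n_2 \times n_1$ matrix by transposing the two inputs, invoking the algorithm, and transposing the result. The three transpositions cost $O(n_1 n_2 + n_2 n_3 + n_1 n_3)$, which is dominated by the cost of reading the inputs and writing the output and is therefore absorbed into the multiplication time. This yields $\Tmat(n_3, n_2, n_1) = O(\Tmat(n_1, n_2, n_3))$; the reverse direction is identical, so the two are equal up to constants. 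This handles the reversal permutation of the triple.

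\textbf{Cyclic rotation.} The substantive step is invariance under the $3$-cycle $(n_1, n_2, n_3) \mapsto (n_2, n_3, n_1)$. Here I would pass to the bilinear formulation: associate to the multiplication problem its structure tensor, encoded by the trilinear form $(A, B, C) \mapsto \tr(ABC)$ with $A \in \R^{n_1 \times n_2}$, $B \in \R^{n_2 \times n_3}$, and $C \in \R^{n_3 \times n_1}$. The bilinear (tensor) rank of this form governs the number of essential multiplications, and the form is manifestly invariant under cyclic permutation of its three slots because $\tr(ABC) = \tr(BCA) = \tr(CAB)$. Consequently the tensor rank for the dimension pattern $(n_1, n_2, n_3)$ equals that for $(n_2, n_3, n_1)$ and for $(n_3, n_1, n_2)$. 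Invoking Strassen's theory as developed in~\cite{bcs97,b13}, the total arithmetic cost $\Tmat$ is controlled, up to constant factors and lower-order additive terms coming from the linear combinations, by this bilinear complexity, so cyclic invariance of the rank transfers to $\Tmat$.

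Composing the $3$-cycle (which produces the patterns $(n_2, n_3, n_1)$ and $(n_3, n_1, n_2)$) with the transposition (which reverses a triple) generates all six orderings appearing in the statement, completing the argument. The main obstacle is precisely the cyclic step: unlike transposition, it cannot be realized by a cheap preprocessing of inputs and postprocessing of outputs, so one genuinely needs the tensor-rank characterization and Strassen's reduction from bilinear complexity to arithmetic complexity. The care required there is in verifying that everything beyond the essential multiplications—the additions and the formation of linear combinations—stays within a constant factor and lower-order additive terms, so that the clean $O(\cdot)$ equalities hold rather than merely a relation with polylogarithmic or polynomial slack.
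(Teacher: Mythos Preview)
Your argument is correct and is essentially the standard one. Note, however, that the paper does not supply its own proof of this fact: it is stated as a known result with citations to \cite{bcs97,b13}, so there is no in-paper proof to compare against. Your sketch---transposition via $(AB)^\top = B^\top A^\top$ plus cyclic invariance via the symmetry of the trilinear form $\tr(ABC)$ and Strassen's bilinear-complexity framework, then generating $S_3$ from these two---is precisely the route taken in those references, so your proposal is in line with what the paper defers to.
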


\subsection{\texorpdfstring{$\mathsf{SETH}$}{SETH} Hypothesis}\label{sec:preli:complexity}

Now, we introduce a fundamental theoretical assumption underpinning many of the results presented in this paper: the Strong Exponential Time Hypothesis ($\SETH$). This hypothesis serves as a cornerstone for establishing the hardness of various computational problems.

Our results are built on the common conjecture.~\cite{ip01} introduce the Strong Exponential Time Hypothesis (SETH) as a stronger form of the $\mathsf{P} \neq \mathsf{NP}$ conjecture.
It suggests that our current best $\mathsf{SAT}$ algorithms are  optimal and is a popular conjecture for proving fine-grained lower bounds for a wide variety of algorithmic problems \cite{cdl+16,w18}.

\begin{hypothesis}[$\SETH$]\label{hyp:seth}
    $\forall \epsilon > 0$, $\exists k \in \mathbb{Z}^+$ and $k$ greater or equal to $3$ such that, even when utilizing randomized algorithms, within the time of $O(2^{(1-\epsilon)n})$, we cannot solve $k$-$\mathsf{SAT}$ problems with $n$ variables.
\end{hypothesis}

\subsection{Basic Facts}\label{sec:app_preli:facts}

In this section, we present several basic facts that are used throughout the paper to develop the proof of our main results. These fundamental properties enable efficient computations of vectors and matrices products in our paper.

Here, we first introduce the facts about row-wise Kronecker products.
\begin{fact}[Row-wise Kronecker product]\label{fac:olash_folklore}
Let $U_1, V_1 \in \R^{n \times k_1}$. Let $U_2, V_2 \in \R^{n \times k_2}$. Then we have
\begin{align*}
    (U_1 V_1^\top) \circ (U_2 V_2^\top) = (U_1 \oslash U_2)  (V_1 \oslash V_2)^\top
\end{align*}

Here, given $U_1 \in \R^{n \times k_1}$ and $U_2 \in \R^{n \times k_2}$, we define the row-wise Kronecker product as $U_1 \oslash U_2 \in \R^{n \times k_1 k_2}$. That is, $(U_1 \oslash U_2)_{i, l_1 + (l_2 - 1)k_1 }:= (U_1)_{i,l_1} U_{i,l_2}$ for all $i \in [n]$, $l_1 \in [k_1]$ and $l_2 \in [k_2]$
\end{fact}

To simplify the computation of certain matrix operations, we can use a technique known as the tensor trick, which reformulates matrix products into operations involving vectorized representations and Kronecker products.
\begin{fact}[Tensor trick]\label{fac:tensor_trick_basic}
Let $X \in \R^{d \times d}$. Let $x \in \R^{d^2}$ be the vectorization of $X$. Let there be two $n\times d$ matrices $A_1,A_2$, and we define $\A = A_1 \otimes A_2$. Then, we can get $\vect ( A_1 X A_2^\top ) = \A x$.
\end{fact}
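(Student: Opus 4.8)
The plan is to prove the identity by a direct, entry-by-entry comparison of the two sides, reducing everything to the defining formulas for the Kronecker product and for the vectorization operator $\vect$. Since both $\vect(A_1 X A_2^\top)$ and $\A x$ are vectors in $\R^{n^2}$ (recall $A_1, A_2 \in \R^{n \times d}$, $X \in \R^{d \times d}$, $x = \vect(X) \in \R^{d^2}$, and $\A = A_1 \otimes A_2 \in \R^{n^2 \times d^2}$ by the dimension convention in Section~\ref{sec:preli:notation}), it suffices to check that their coordinates agree under a fixed flattening. The first thing I would do is pin down the row-major convention for $\vect$ that is consistent with the Kronecker-product indexing used in the paper, namely $\vect(M)_{(i-1)n+j} = M_{i,j}$ for an $n \times n$ matrix $M$ and $x_{(a-1)d+b} = X_{a,b}$ for $X \in \R^{d \times d}$. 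Fixing this convention up front is essential, because the opposite pairing would instead yield $A_2 \otimes A_1$, and aligning it with the stated $\A = A_1 \otimes A_2$ is the one genuinely delicate point.

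Next I would expand the $(i,j)$ entry of the triple product directly, using the definition of matrix multiplication together with $(A_2^\top)_{b,j} = (A_2)_{j,b}$:
\begin{align*}
(A_1 X A_2^\top)_{i,j} = \sum_{a=1}^d \sum_{b=1}^d (A_1)_{i,a}\, X_{a,b}\, (A_2)_{j,b}.
\end{align*}
In parallel I would identify the matching coordinate of $\A x$, writing a generic column index of $\A$ as $k = (a-1)d + b$ so that, by the Kronecker-product definition $(C\otimes D)_{(j_0-1)n_1+j_1,\,(i_0-1)m_1+i_1} = C_{j_0,i_0}\, D_{j_1,i_1}$ specialized to $C = A_1$ and $D = A_2$ (with $j_0 = i, j_1 = j, i_0 = a, i_1 = b, n_1 = n, m_1 = d$), we obtain $\A_{(i-1)n+j,\,(a-1)d+b} = (A_1)_{i,a}\,(A_2)_{j,b}$.

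Combining these two computations, the $((i-1)n+j)$-th coordinate of $\A x$ becomes
\begin{align*}
(\A x)_{(i-1)n+j} = \sum_{a=1}^d \sum_{b=1}^d (A_1)_{i,a}\,(A_2)_{j,b}\, X_{a,b},
\end{align*}
which is precisely the expansion of $(A_1 X A_2^\top)_{i,j}$ displayed above, and hence equals $\vect(A_1 X A_2^\top)_{(i-1)n+j}$ under the chosen convention. Since $i, j \in [n]$ were arbitrary, the two vectors agree in every coordinate, proving the claim. The main obstacle here is purely bookkeeping: ensuring that the vectorization order and the $(j_0-1)n_1+j_1$ Kronecker index are aligned so that the two factors appear in the order $A_1 \otimes A_2$ rather than $A_2 \otimes A_1$. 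Once that alignment is fixed, the argument is a one-line double sum carrying no analytic content.
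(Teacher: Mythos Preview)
Your proposal is correct. The paper states this tensor-trick fact without proof (it is presented as a known folklore identity), so there is no paper proof to compare against; your direct coordinate-by-coordinate verification is exactly the standard way to establish it, and your care in fixing a vectorization convention compatible with the paper's Kronecker indexing $(C\otimes D)_{(j_0-1)n_1+j_1,\,(i_0-1)m_1+i_1}=C_{j_0,i_0}D_{j_1,i_1}$ is the right emphasis, since that bookkeeping is the only nontrivial content.
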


Given the above tensor trick fact, we can derive additional properties that extend its applicability to exponential operations on matrices. These properties can help us compute the exponential of matrix products efficiently. The properties are presented below.

\begin{fact}\label{fac:tensor_trick_more}
Let there be two $n\times d$ matrices $A_1, A_2$, and we define $\A = A_1 \otimes A_2$. Let $X \in \R^{d \times d}$. Let $\A_{j_0} \in \R^{n \times d^2}$ be a block of $\A$. We introduce $x \in \R^{d^2}$ as the vectorization of $X$.
Thus, we get
\begin{itemize}
    \item $ ( \exp(A_1 X A_2^\top)_{j_0,*} )^\top = \exp( \A_{j_0} x ) $
    \item $\vect ( \exp( A_1 X A_2^\top ) ) = \exp( \A x)$,
\end{itemize}
For the $j_0$-th row of $\exp(A_1 X A_2^\top) \in \R^{n\times n}$, we use the notation $\exp(A_1 X A_2^\top)_{j_0,*}$.
\end{fact}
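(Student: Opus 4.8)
The plan is to reduce both identities to the basic tensor trick (Fact~\ref{fac:tensor_trick_basic}) combined with the elementary observation that the entrywise exponential commutes with any operation that merely permutes or reshapes the entries of a matrix, namely $\vect(\cdot)$ and transposition. Concretely, for any matrix $M \in \R^{n \times n}$, applying $\exp$ entrywise and then vectorizing produces the same vector as first vectorizing and then applying $\exp$ entrywise, since $\vect$ only relabels the index $(i,j)$ as a single coordinate while $\exp$ acts independently on each entry; the same commutation holds for transposition. I would state this commutation once and reuse it for both bullets.

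For the second bullet, I would start from Fact~\ref{fac:tensor_trick_basic}, which gives $\vect(A_1 X A_2^\top) = \A x$ with $\A = A_1 \otimes A_2$ and $x = \vect(X)$. Taking $M = A_1 X A_2^\top$ and using the commutation observation, $\vect(\exp(A_1 X A_2^\top)) = \exp(\vect(A_1 X A_2^\top)) = \exp(\A x)$, which is exactly the claim. No computation beyond invoking the basic trick is required here.

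For the first bullet, I would verify the block-restricted form of the tensor trick. Under the paper's Kronecker indexing convention, the block $\A_{j_0} \in \R^{n \times d^2}$ collects the rows of $\A$ whose first (size-$n$) index is fixed to $j_0$, so its $(j_1,(i_0-1)d+i_1)$ entry equals $(A_1)_{j_0,i_0}(A_2)_{j_1,i_1}$. Contracting against $x = \vect(X)$ then gives $(\A_{j_0} x)_{j_1} = \sum_{i_0,i_1} (A_1)_{j_0,i_0} X_{i_0,i_1} (A_2)_{j_1,i_1} = (A_1 X A_2^\top)_{j_0,j_1}$, i.e. $\A_{j_0} x = (\,(A_1 X A_2^\top)_{j_0,*}\,)^\top$. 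Applying $\exp$ entrywise to both sides and using that $\exp$ commutes with transposition yields $\exp(\A_{j_0} x) = (\exp(A_1 X A_2^\top)_{j_0,*})^\top$, as desired.

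The only step demanding care is the bookkeeping of the two indexing conventions that appear in the setup: the Kronecker-product layout of $\A$ fixed in the notation section and the vectorization order used to define $x$. The argument goes through precisely because these two conventions are mutually consistent, so that the double sum collapses to the matrix product $A_1 X A_2^\top$; I expect this consistency check to be the main (and only) obstacle. There is no genuine analytic difficulty—the entire statement is a reindexing verification layered on top of the already-established Fact~\ref{fac:tensor_trick_basic}.
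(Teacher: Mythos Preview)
Your proposal is correct and follows essentially the same approach as the paper, which simply invokes Fact~\ref{fac:tensor_trick_basic} and leaves the commutation of entrywise $\exp$ with reshaping implicit. Your write-up is in fact more careful than the paper's one-line justification, and your explicit flagging of the indexing-consistency check is appropriate.
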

\begin{proof}
From Lemma and Def.~\ref{fac:tensor_trick_basic}, we are able to prove this fact. We omit the details here since the proof is straightforward.
\end{proof}

\begin{fact}\label{fac:circ_rules}
We suppose there are three vectors of  $n$ dimension $x,y,z$. Thus, we get
\begin{itemize}
    \item $\langle x \circ y, z\rangle = x^\top \diag(y) z$.
    \item $\langle x, y \rangle = \langle x \circ y, {\bf 1}_n \rangle$.
\end{itemize}
\end{fact}

Next, we introduce some important properties of inner products that help us to reshape the equations in the proofs.

\begin{fact}[Inner Products]\label{fac:inner_prod}
    We suppose $n \in \mathbb{Z}^+$, and we suppose the $n$ dimension vectors $a, b, c$ and a scalar $d$. Then, we have
    \begin{itemize}
        \item $\langle a, b \rangle = \langle a \circ b, {\bf 1}_n \rangle$.
        \item $\langle da, b \rangle = d\langle a, b \rangle = \langle a, db \rangle = d\langle b, a \rangle$.
        \item $\langle a + c, b \rangle = \langle a, b \rangle + \langle c, b \rangle$.
        \item $\langle a, b \rangle = a^\top b$.
        \item $\langle a \circ c, b \rangle = \langle a , b \circ c \rangle$.
        \item $\langle a \circ b, c \rangle = b^\top \diag(a)c$
    \end{itemize}
\end{fact}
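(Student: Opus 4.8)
The plan is to reduce every one of the six identities to the single scalar definition $\langle a, b\rangle = \sum_{i=1}^n a_i b_i$ and then verify each claim by comparing the two sides entry-by-entry over the index $i \in [n]$. Since all six statements are either multilinear or purely notational, no analytic estimate is involved; the whole argument is bookkeeping together with the commutativity and associativity of scalar multiplication in $\R$. First I would record the two workhorse conventions from Section~\ref{sec:preli:notation}: the Hadamard product satisfies $(a\circ b)_i = a_i b_i$, and $\diag(a)$ carries $a_i$ on its $(i,i)$ entry. I would also note at the outset that the inner product is symmetric, $\langle a, b\rangle = \langle b, a\rangle = \sum_i a_i b_i$, since two of the claims invoke this.

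I would prove the fourth identity $\langle a,b\rangle = a^\top b$ first, as it is immediate from the definition of the matrix--vector product and lets the remaining matrix-form claims quote it. For the first identity, expanding gives $\langle a \circ b, {\bf 1}_n\rangle = \sum_i (a\circ b)_i \cdot 1 = \sum_i a_i b_i = \langle a,b\rangle$. The second identity collects homogeneity and symmetry: pulling the scalar out of the sum gives $\langle da, b\rangle = d\sum_i a_i b_i$, which equals both $d\langle a,b\rangle$ and, placing $d$ on the other factor, $\langle a, db\rangle$, while the final form $d\langle b,a\rangle$ uses symmetry. The third identity is additivity, obtained by splitting $\sum_i (a_i + c_i) b_i$ into two sums.

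The last two identities are the ones actually used to reshape expressions in the later proofs, so I would state them with care. For item five I would write $\langle a\circ c, b\rangle = \sum_i a_i c_i b_i$ and then regroup the scalar product as $a_i (b_i c_i)$ to obtain $\langle a, b\circ c\rangle$; the only content is that scalar multiplication commutes, so the Hadamard factor $c$ may be moved from one argument of the inner product to the other. For item six I would expand both sides independently: the left side is $\sum_i a_i b_i c_i$, whereas $b^\top \diag(a) c = \sum_i b_i a_i c_i$ because $\diag(a)c$ has $i$-th entry $a_i c_i$, and the two sums agree term by term.

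There is no genuine obstacle in this statement. The single point requiring attention is consistency of conventions: the paper uses $\circ$ for both vector and matrix Hadamard products and uses $\diag(\cdot)$ as fixed in Section~\ref{sec:preli:notation}, so I would make sure every manipulation respects $(a\circ b)_i = a_i b_i$ and $(\diag(a))_{i,i} = a_i$ exactly. Once these conventions are pinned down, each of the six claims collapses to a one-line term-by-term verification, and the proof is complete.
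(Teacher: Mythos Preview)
Your proposal is correct. The paper states this as a basic fact without proof, and your entry-by-entry verification from the definition $\langle a,b\rangle = \sum_i a_i b_i$ together with the conventions $(a\circ b)_i = a_i b_i$ and $(\diag(a))_{i,i} = a_i$ is exactly the natural way to justify it.
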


\section{Key Definitions of RoPE Attention}\label{sec:app_preli:define}

In this section, we decompose RoPE attention into its individual components, each representing a specific function or operation within the attention mechanism. These definitions provide a structured framework for understanding and analyzing the properties of RoPE attention in subsequent sections.

We denote the $d^4$-dimensional vector $x \in \R^{d^4}$ as the vectorization of a $d^2 \times d^2$ matrix $\mathsf{X}$. We divide the RoPE attention to the following components to simplify our calculations and notation.

First, we define $u(x)$ for the softmax operation.

\begin{definition}[Softmax $u(x)$]\label{def:u_x}
    We suppose there are two $n^2 \times d^2$ matrices $\mathsf{A}, \mathsf{W}$. We define $\wt{\mathsf{A}} $ as $\mathsf{A} \oslash \mathsf{W}$, which is an $n^2 \times d^4$ matrix. We use $\wt{\mathsf{A}}_{j_0}$ to denote the an $n \times d^4$ subblock of $\wt{\mathsf{A}}$, given that the total counts of subblocks is $n$. The function is defined as $u(x)_{j_0}$ maps a $d^4$ dimensional vector to an $n$-dimensional vector with every $j_0 \in [n]$ such that
    \begin{align*}
        u{(x)}_{j_0} := \exp(\wt{\mathsf{A}}_{j_0} x).
    \end{align*}
\end{definition}

Next, we define $\alpha(x)$ for the diagonal matrix.

\begin{definition}[Diagonal matrix $\alpha(x)$]\label{def:alpha_x}
      We suppose two $n^2 \times d^2$ matrices $\mathsf{A}, \mathsf{W}$. Suppose that $\wt{\mathsf{A}} := \mathsf{A} \oslash \mathsf{W} \in \R^{n^2 \times d^4}$. We use $\wt{\mathsf{A}}_{j_0}$ to denote the an $n \times d^4$ subblock of $\wt{\mathsf{A}}$, given the counts of total subblocks is $n$. The function is defined as $\alpha(x)_{j_0}$ maps from a $d^4$-dimensional vector to a scalar with every $j_0 \in [n]$ such that
    \begin{align*}
        \alpha{(x)}_{j_0} := \langle \exp(\wt{\mathsf{A}}_{j_0} x), \mathbf{1}_{n} \rangle.
    \end{align*}  
\end{definition}

% \subsection{Key Definitions of RoPE Attention}\label{sec:preli:define}

We define $s(x)$ for the normalized softmax $(D^{-1} \cdot \mathrm{softmax})$.

\begin{definition}[Normalized softmax $s(x)$]\label{def:f_x}
 From Def.~\ref{def:u_x}, it defines $u(\cdot)_{j_0}$ , and we have $\alpha(\cdot)_{j_0}$based on Def.~\ref{def:alpha_x}. The function $\f(x)_{j_0}$ maps a $d^4$-dimensional vector to an $n$-dimensional vector given every $j_0 \in [n]$ such that
% \begin{align*}
    $\f(x)_{j_0} := \alpha{(x)}_{j_0}^{-1} u(x)_{j_0}.$
% \end{align*}
\end{definition}

Lastly, we define $v(y)$ for the value matrix in the attention component.

\begin{definition}[Value matrix $v(y)$]\label{def:h_y}
    Let $A_3 \in \R^{n \times d}$ be a matrix. We define $\h(y)_{i_0}$  as the $i_0$-th column of $\h(y)$.
    We define the function $\h(y)_{i_0}$ maps a 
    $d^2$-dimensional vector to an $n$-dimensional vector, given each $i_0 $ in the set $[d]$, such that
    % \begin{align*}
        $\h(y)_{i_0} := A_3 Y_{*, i_0}$
    % \end{align*}
    where $y \in \R^{d^2}$ is the vectorization of $n\times n$ matrix $Y$. 
\end{definition}

Given the definitions of the RoPE attention components, we can now define the loss functions, which quantify the difference between the computed and target values in the context of RoPE attention. 

We first introduce the error $\c(x)_{j_0,i_0}$ between the exact RoPE attention computation $\langle \f(x)_{j_0}, \h(y)_{i_0} \rangle$ and approximated RoPE computation $E_{j_0,i_0}$. 

\begin{definition}[RoPE attention error $\c(x)$]\label{def:c}
From Def.~\ref{def:f_x}, with every $j_0$ in the set $[n]$, it gives $\f(x)_{j_0}$ as an $n$-dimensional normalized vector, and we define $\h(y)_{i_0}$ based on Def.~\ref{def:h_y} given that each $i_0 \in [d]$. Defining a function $\c(x)_{j_0, i_0}$ maps a $d^4$-dimensional vector to a scalar with each $j_0 \in [n]$ and each $i_0 \in [d]$ such that
\begin{align*}
    \c(x)_{j_0,i_0}:= \langle \f(x)_{j_0}, \h(y)_{i_0} \rangle - E_{j_0,i_0}.
\end{align*}
Here $E_{j_0,i_0}$ is the $(j_0,i_0)$-th coordinate of $E \in \R^{n \times d}$ for each $j_0$ in the set $[n]$ and $i_0$ in the set $[d]$, that is $\c(x)= \f(x) \h(y) - E$.
\end{definition}

Then, we define the Loss term.

\begin{definition}[Loss term $\L(x)$]\label{def:l}
Here we let
%\begin{align*}
$
 \L(x)_{j_0,i_0} := 0.5 \c(x)_{j_0,i_0}^2
 $ with every $j_0$ in the set $[n]$ and $i_0$ in the set $[d]$. 
%\end{align*}
\end{definition}

\section{RoPE Attention Gradient Calculation}\label{sec:app_grad_compute}

In this section, we analyze the time complexity of exact gradient computation. 
In Section~\ref{sec:app_closed_form_grad}, we reformulate the closed form of the gradient.
In Section~\ref{sec:app_f_h_time}, we show the time complexity for $\f(x)$ and $\h(y)$.
In Section~\ref{sec:app_c_time}, we show the time complexity for $\c(x)$.
In Section~\ref{sec:app_q_p_time}, we show the time complexity for $\q(x)$ and $\p(x)$. 
In Section~\ref{sec:app_total_time}, we show the total time complexity for computing the gradient of RoPE attention.

In this section, we compute the entry-wise gradient of the RoPE attention loss function from Lemma~\ref{lemma:L}

\begin{lemma}
\label{lem:app_gradient_x}

If we have for every $i \in [d^4]$,
\begin{itemize}
    \item The column function $u(x)_{j_0} \in \R^n$ (Definitions~\ref{def:u_x}),
    \item $\alpha(x)_{j_0}$ is a real number (Def.~\ref{def:alpha_x}),
    \item $\f(x)_{j_0}$ is an arbitrary element in $\R^n$ (Def.~\ref{def:f_x}),
    \item $\c(x)_{j_0,i_0}$ is a real number (Def.~\ref{def:c}), and
    \item $\L(x)_{j_0,i_0}$ is a real number (Def.~\ref{def:l}).
\end{itemize}

Then, we have $\forall j_0 \in [n]$, $\forall i_0 \in [d]$,
\begin{itemize}
    \item {\bf Part 1.} 
    \begin{align*}
        \frac{ \d\wt{\A}_{j_0} x }{ \d x_i} = \underbrace{(\wt{\A}_{j_0})_{*,i}}_{n \times 1}.
    \end{align*}
    \item {\bf Part 2.} 
    \begin{align*}
        \frac{ \d u(x)_{j_0} }{ \d x_i } = u(x)_{j_0} \circ (\wt{\A}_{j_0})_{*,i}.
    \end{align*}
   
    \item {\bf Part 3.}  
    \begin{align*}
        \frac{\d \alpha(x)_{j_0}}{\d x_i} = \langle (\wt{\A}_{j_0})_{*,i}, u(x)_{j_0}\rangle.
    \end{align*}

    \item {\bf Part 4.} 
    \begin{align*}
        \frac{ \d \f(x)_{j_0} }{ \d x_i } = -\f(x)_{j_0} \langle (\wt{\A}_{j_0})_{*,i}, \f(x)_{j_0}\rangle + \f(x)_{j_0} \circ (\wt{\A}_{j_0})_{*, i}
    \end{align*}

    \item {\bf Part 5.}  
    \begin{align*}
        \frac{ \d \langle \f(x)_{j_0} , \h(y)_{i_0} \rangle }{ \d x_i } = \langle -\f(x)_{j_0} \langle (\wt{\A}_{j_0})_{*,i}, \f(x)_{j_0}\rangle + \f(x)_{j_0} \circ (\wt{\A}_{j_0})_{*, i}, A_3 Y_{*, i_0} \rangle.
    \end{align*} 
   
    \item {\bf Part 6.} 
    \begin{align*}
        \frac{ \d \c(x)_{j_0,i_0} }{ \d x_i} = \langle -\f(x)_{j_0} \langle (\wt{\A}_{j_0})_{*,i}, \f(x)_{j_0}\rangle + \f(x)_{j_0} \circ (\wt{\A}_{j_0})_{*, i}, A_3 Y_{*, i_0} \rangle.
    \end{align*}
   
    \item {\bf Part 7.} 
    \begin{align*}
        \frac{\d \L(x)_{j_0,i_0}}{\d x_i} =  \c(x)_{j_0,i_0} \langle -\f(x)_{j_0} \langle (\wt{\A}_{j_0})_{*,i}, \f(x)_{j_0}\rangle + \f(x)_{j_0} \circ (\wt{\A}_{j_0})_{*, i}, A_3 Y_{*, i_0} \rangle.
    \end{align*}
   
\end{itemize}
\end{lemma}

\begin{proof}
To show {\bf Part 1}, 
\begin{align*}
    \frac{ \d\wt{\A}_{j_0} x }{ \d x_i } 
    = & ~ \wt{\A}_{j_0} \frac{ \d x }{ \d x_i }\\
    = & ~ \underbrace{\wt{\A}_{j_0}}_{n\times d^4} \underbrace{e_i}_{d^4 \times 1}\\
    = & ~ (\wt{\A}_{j_0})_{*,i},
\end{align*}
and we note that the 1st and 2nd equalities are by 
the basic derivative rule and the 3rd equality is due to the basis vector definition.

To show {\bf Part 2}, 
\begin{align*}
    \frac{ \d u(x)_{j_0} }{ \d  x_i } = & ~ \frac{\d \exp(\wt{\mathsf{A}}_{j_0} x)}{\d x_i}\\
    = & ~ \exp(\wt{\mathsf{A}}_{j_0} x) \circ \frac{\d \wt{\mathsf{A}}_{j_0} x}{\d x_i}\\
    = & ~ \exp(\wt{\mathsf{A}}_{j_0} x) \circ (\wt{\A}_{j_0})_{*,i} \\
    = & ~ u(x)_{j_0} \circ (\wt{\A}_{j_0})_{*,i},
\end{align*}
and we note that the 1st equality is by Def.~\ref{def:u_x}, the 2nd equality is by 
chain rule, the 3rd equality is due to {\bf Part 1}, and the 4th equality is because of Def.~\ref{def:u_x}.

To show {\bf Part 3}, 
\begin{align*}
    \frac{\d \alpha(x)_{j_0}}{\d  x_i}
    = & ~ \frac{\d \langle \exp(\wt{\mathsf{A}}_{j_0} x), \mathbf{1}_{n} \rangle }{\d x_i} \\ 
    = & ~ \langle \frac{\d \exp(\wt{\mathsf{A}}_{j_0} x)}{\d x_i}, \mathbf{1}_{n} \rangle  + \langle \exp(\wt{\mathsf{A}}_{j_0} x), \frac{\d \mathbf{1}_{n}  }{\d x_i}\rangle \\
    = & ~ \langle \frac{\d \exp(\wt{\mathsf{A}}_{j_0} x)}{\d x_i}, \mathbf{1}_{n} \rangle\\
    = & ~ \langle u(x)_{j_0} \circ (\wt{\A}_{j_0})_{*,i}, \mathbf{1}_{n} \rangle \\
    = & ~ \langle (\wt{\A}_{j_0})_{*,i}, u(x)_{j_0}\rangle,
\end{align*}
and we note that the 1st equality is by Def.~\ref{def:alpha_x}, the 2nd equality is by 
product rule, the 3rd equality is due to $\frac{\d \mathbf{1}_{n}  }{\d x_i} = {\bf 0}_n$, the 4th equality is because of Def.~\ref{def:u_x}, and 5th equality derives from basic algebra.

To show {\bf Part 4}, 
\begin{align*}
    \frac{ \d \f(x)_{j_0} }{ \d  x_i } = & ~ \frac{\d (\alpha{(x)}_{j_0}^{-1}u(x)_{j_0})}{\d  x_i} \\
    = & ~ \frac{\d \alpha{(x)}_{j_0}^{-1}}{\d  x_i}u(x)_{j_0} + \alpha{(x)}_{j_0}^{-1} \frac{\d u(x)_{j_0}}{\d  x_i} \\
    = & ~ -\alpha{(x)}_{j_0}^{-2}\frac{\d \alpha{(x)}_{j_0}}{\d  x_i}u(x)_{j_0} + \alpha{(x)}_{j_0}^{-1} \frac{\d u(x)_{j_0}}{\d  x_i} \\
    = & ~ -\alpha{(x)}_{j_0}^{-2}\langle \underbrace{(\wt{\A}_{j_0})_{*,i}}_{n \times 1}, \underbrace{u(x)_{j_0}}_{n \times 1}\rangle u(x)_{j_0} + \alpha{(x)}_{j_0}^{-1} (\underbrace{u(x)_{j_0}}_{n \times 1} \circ \underbrace{(\wt{\A}_{j_0})_{*,i}}_{n \times 1}) \\
    = & ~ -\alpha{(x)}_{j_0}^{-1} \f(x)_{j_0} \langle \underbrace{(\wt{\A}_{j_0})_{*,i}}_{n \times 1}, \underbrace{u(x)_{j_0}}_{n \times 1}\rangle + \f(x)_{j_0} \circ (\wt{\A}_{j_0})_{*, i} \\
    = & ~ -\f(x)_{j_0} \langle \underbrace{(\wt{\A}_{j_0})_{*,i}}_{n \times 1}, \underbrace{\f(x)_{j_0}}_{n \times 1}\rangle + \f(x)_{j_0} \circ (\wt{\A}_{j_0})_{*, i},
\end{align*}
and we note that the 1st equality is by Def.~\ref{def:f_x}, the 2nd equality is by 
product rule, the 3rd equality is due to chain rule, the 4th equality is because of previous parts, the 5th and 6th equalities derive from Def.~\ref{def:f_x}.

To show {\bf Part 5}, 
\begin{align*}
    \frac{ \d \langle \f(x)_{j_0} , \h(y)_{i_0} \rangle }{ \d  x_i } 
    = & ~ \langle  \frac{ \d \f(x)_{j_0} }{ \d  x_i }, \h(y)_{i_0} \rangle + \langle \f(x)_{j_0}, \frac{\d \h(y)_{i_0}}{\d x_i} \rangle \\
    = & ~ \langle  \frac{ \d \f(x)_{j_0} }{ \d  x_i }, \h(y)_{i_0} \rangle \\
    = & ~ \langle -\f(x)_{j_0} \langle (\wt{\A}_{j_0})_{*,i}, \f(x)_{j_0}\rangle + \f(x)_{j_0} \circ (\wt{\A}_{j_0})_{*, i}, A_3 Y_{*, i_0} \rangle,
\end{align*}
and we note that the 1st equality is due to the product rule, the 2nd equality is by 
$\frac{\d \h(y)_{i_0}}{\d x_i} = {\bf 0}_n$, and the 3rd equality is due to the previous part.

To show {\bf Part 6}, 
\begin{align*}
    \frac{ \d \c(x)_{j_0,i_0} }{ \d  x_i } 
    = & ~ \frac{\d (\langle \f(x)_{j_0}, \h(y)_{i_0} \rangle - E_{j_0,i_0})}{\d  x_i} \\
    = & ~ \frac{\d \langle \f(x)_{j_0}, \h(y)_{i_0} \rangle}{\d  x_i} \\
    = & ~ \langle -\f(x)_{j_0} \langle \underbrace{(\wt{\A}_{j_0})_{*,i}}_{n \times 1}, \underbrace{\f(x)_{j_0}}_{n \times 1}\rangle + \f(x)_{j_0} \circ (\wt{\A}_{j_0})_{*, i}, A_3 Y_{*, i_0} \rangle,
\end{align*}
and we note that the 1st equality is by Def.~\ref{def:c}, the 2nd equality is by 
$\frac{\d E_{j_0,i_0}}{\d x_i} = {\bf 0}_n$, and the 3rd equality is due to the previous part.

To show {\bf Part 7},
\begin{align*}
    \frac{\d \L(x)_{j_0,i_0}}{\d  x_i} = & ~ 0.5 \frac{\d (\c(x)_{j_0,i_0})^2}{\d  x_i} \\
    = & ~ \c(x)_{j_0,i_0} \cdot  \frac{ \d \c(x)_{j_0,i_0} }{ \d x_i } \\ 
    = & ~ \c(x)_{j_0,i_0} \langle -\f(x)_{j_0} \langle \underbrace{(\wt{\A}_{j_0})_{*,i}}_{n \times 1}, \underbrace{\f(x)_{j_0}}_{n \times 1}\rangle + \f(x)_{j_0} \circ (\wt{\A}_{j_0})_{*, i}, A_3 Y_{*, i_0} \rangle,
\end{align*}
and we note that the 1st equality is by Def.~\ref{def:l}, the 2nd equality is by 
chain rule and the 3rd equality is due to the previous part.
\end{proof}

\subsection{Reformulate the Gradient into Its Closed Form}\label{sec:app_closed_form_grad}
In this section, we reformulate the entry-wise gradient of the RoPE loss function from Lemma~\ref{lem:grad_closed} into its matrix form. 

We first begin with reformulating the gradient with respect to the entire vector $x$.

\begin{lemma}[Gradient Reformulation, $\frac{\d \L(x)_{j_0,i_0}}{\d x}$]\label{lem:app_grad_drop_i}

If we have for every $i \in [d^4]$,
\begin{itemize}
    \item The column function $u(x)_{j_0} \in \R^n$ (Definitions~\ref{def:u_x}),
    \item $\alpha(x)_{j_0}$ is a real number (Def.~\ref{def:alpha_x}),
    \item $\f(x)_{j_0}$ is an arbitrary element in $\R^n$ (Def.~\ref{def:f_x}),
    \item $\c(x)_{j_0,i_0}$ is a real number (Def.~\ref{def:c}), and
    \item $\L(x)_{j_0,i_0}$ is a real number (Def.~\ref{def:l}).
\end{itemize}

Then, we have
    \begin{align*}
        \frac{\d \L(x)_{j_0,i_0}}{\d x} 
        = & ~ \c(x)_{j_0,i_0}\wt{\A}_{j_0}^\top (\diag (\f(x)_{j_0})A_3 Y_{*, i_0} - \f(x)_{j_0}\f(x)_{j_0}^\top A_3 Y_{*, i_0}).  
    \end{align*}
\end{lemma}

\begin{proof}
    \begin{align*}
        \frac{\d \L(x)_{j_0,i_0}}{\d x_i} 
        = & ~ \c(x)_{j_0,i_0} \langle - \f(x)_{j_0} \langle (\wt{\A}_{j_0})_{*,i}, \f(x)_{j_0}\rangle + \f(x)_{j_0}\circ (\wt{\A}_{j_0})_{*, i}, A_3 Y_{*, i_0} \rangle\\
        = & ~ \c(x)_{j_0,i_0} \langle \f(x)_{j_0} \circ (\wt{\A}_{j_0})_{*, i}, A_3 Y_{*, i_0} \rangle - \c(x)_{j_0,i_0} \langle  \f(x)_{j_0} \langle (\wt{\A}_{j_0})_{*,i}, \f(x)_{j_0}\rangle, A_3 Y_{*, i_0} \rangle\\
        = & ~ \c(x)_{j_0,i_0} \langle \f(x)_{j_0} \circ (\wt{\A}_{j_0})_{*, i}, A_3 Y_{*, i_0} \rangle - \c(x)_{j_0,i_0} \langle (\wt{\A}_{j_0})_{*,i}, \f(x)_{j_0}\rangle \langle \f(x)_{j_0}, A_3 Y_{*, i_0} \rangle \\
        = & ~ \c(x)_{j_0,i_0}(\wt{\A}_{j_0}^\top)_{*, i} \diag (\f(x)_{j_0})A_3 Y_{*, i_0} - \c(x)_{j_0,i_0} (\wt{\A}_{j_0}^\top)_{*,i} \f(x)_{j_0} \f(x)_{j_0}^\top A_3 Y_{*, i_0}  \\
        = & ~ \c(x)_{j_0,i_0} (\wt{\A}_{j_0}^\top)_{*, i} (\diag (\f(x)_{j_0}) - \f(x)_{j_0} \f(x)_{j_0}^\top) A_3 Y_{*, i_0}.
    \end{align*}
    where the 1st step follows from Lemma~\ref{lem:grad_closed}, and all other steps follow from Fact~\ref{fac:inner_prod}.

    Then, the gradient can be reformulated as follows.
    \begin{align*}
        \frac{\d \L(x)_{j_0,i_0}}{\d x} 
        =  \c(x)_{j_0,i_0}\wt{\A}_{j_0}^\top (\diag (\f(x)_{j_0})A_3 Y_{*, i_0} - \f(x)_{j_0}\f(x)_{j_0}^\top A_3 Y_{*, i_0}).  
    \end{align*}

    Thus, we complete the proof.
\end{proof}

Next, we show our reformulation of the gradient by dropping the index $i_0$ from $\L(x)_{j_0,i_0}$

\begin{lemma}[Gradient Reformulation, $\frac{\d \L(x)_{j_0}}{\d x}$]\label{lem:app_grad_drop_i0}
If we have for every $i \in [d^4]$,
\begin{itemize}
    \item The column function $u(x)_{j_0} \in \R^n$ (Definitions~\ref{def:u_x}),
    \item $\alpha(x)_{j_0}$ is a real number (Def.~\ref{def:alpha_x}),
    \item $\f(x)_{j_0}$ is an arbitrary element in $\R^n$ (Def.~\ref{def:f_x}),
    \item $\c(x)_{j_0,i_0}$ is a real number (Def.~\ref{def:c}), 
    \item $\L(x)_{j_0,i_0}$ is a real number (Def.~\ref{def:l}), and
    \item $\q(x)_{j_0} \in \R^{n}$ is $\underbrace{A_3Y}_{n\times d} \underbrace{\c(x)_{j_0,*}^\top}_{d\times 1}$.
\end{itemize}

Then, we get
    \begin{align*}
        \frac{\d \L(x)_{j_0}}{\d x} 
        = & ~ \wt{\A}_{j_0}^\top (\f(x)_{j_0} \circ A_3\q(x)_{j_0}) \\ 
        - &~ \wt{\A}_{j_0}^\top \f(x)_{j_0} \langle \f(x)_{j_0}, A_3 \q(x)_{j_0} \rangle.
    \end{align*}
\end{lemma}
\begin{proof}
    We can get
    \begin{align*}
        & ~ \frac{\d \L(x)_{j_0}}{\d x} \\
        = & ~ \sum_{i_0\in [d]} \frac{\d \L(x)_{j_0,i_0}}{\d x} \\
        = & ~ \sum_{i_0\in [d]} \wt{\A}_{j_0}^\top (\diag (\f(x)_{j_0}) - \f(x)_{j_0}\f(x)_{j_0}^\top) \c(x)_{j_0,i_0}A_3 Y_{*, i_0} \\
        = & ~ \wt{\A}_{j_0}^\top (\diag (\f(x)_{j_0}) - \f(x)_{j_0}\f(x)_{j_0}^\top) \q(x)_{j_0},
    \end{align*}
    and we note that the first equality is because Lemma~\ref{lemma:L}, the 2nd equality is due to basic algebra, and the 3rd equality comes from the lemma statement.

    Thus, we complete this proof.
\end{proof}

Finally, we reformulate the gradient into its matrix form.

\begin{lemma}[Gradient Reformulation, $\frac{\d \L(x)}{\d x}$, Formal version of Lemma~\ref{lem:grad_closed}]\label{lem:app_grad_closed}
    If we have for every $i \in [d^4]$,
\begin{itemize}
    \item The column function $u(x)_{j_0} \in \R^n$ (Definitions~\ref{def:u_x}),
    \item $\alpha(x)_{j_0}$ is a real number (Def.~\ref{def:alpha_x}),
    \item $\f(x)_{j_0}$ is an arbitrary element in $\R^n$ (Def.~\ref{def:f_x}),
    \item $\c(x)_{j_0,i_0}$ is a real number (Def.~\ref{def:c}), 
    \item $\L(x)_{j_0,i_0}$ is a real number (Def.~\ref{def:l}),
    \item $\q(x)_{j_0} \in \R^{n}$ is $\underbrace{A_3Y}_{n\times d} \underbrace{\c(x)_{j_0,*}^\top}_{d\times 1}$, and
    \item $\p(x)_{j_0} \in \R^n$ is $(\diag (\f(x)_{j_0}) - \f(x)_{j_0}\f(x)_{j_0}^\top) \q(x)_{j_0}$
\end{itemize}

Then, we get 
    \begin{align*}
        \frac{\d \L(x)}{\d x} = \underbrace{\wt{\A}^\top}_{d^4\times n^2} \vect(\underbrace{\p(x)}_{n\times n})
    \end{align*}
\end{lemma}

\begin{proof}
    We show that
    \begin{align*}
        \frac{\d \L(x)}{\d x} 
        = & ~ \sum_{j_0\in [n]} \frac{\d \L(x)_{j_0}}{\d x}\\
        = & ~ \sum_{j_0\in [n]} \wt{\A}_{j_0}^\top \p(x)_{j_0} \\
        = & ~ \wt{\A}^\top \vect(\p(x)),
    \end{align*}
    where we note that the 1st equality is because of Lemma~\ref{lemma:L}, the 2nd equality is based on the lemma statement, and the 3rd equality derives from basic concepts of vectorization.  

    Thus, we complete the proof.
\end{proof}

\subsection{Time Complexity for Computing  \texorpdfstring{$\f(x)$ and $\h(y)$}{f and h} Functions} \label{sec:app_f_h_time}

In this section, we use the vector and matrix multiplication time complexity from Definition~\ref{def:mat_complexity} and Fact~\ref{fac:mul_fact} to analyze the complexity of computing $\f(x)$ and $\h(y)$.

\begin{lemma}
    Pick $\f(x)$ and $\h(y)$ from Def.~\ref{def:f_x} and Def.~\ref{def:h_y},  then it costs $O(\mathcal{T}_{\mathrm{mat}}(n,d,d)+\mathcal{T}_{\mathrm{mat}}(n,d,n))$ time to get $\f(x)$, and it costs $\mathcal{T}_{\mathrm{mat}}(n,d,d)$ time to get $\h(y)$.
\end{lemma}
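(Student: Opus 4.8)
The plan is to prove the two bounds by direct time accounting, handling $\h(y)$ and $\f(x)$ separately. For $\h(y)$, observe from Def.~\ref{def:h_y} that the columns $\h(y)_{i_0} = A_3 Y_{*,i_0}$, $i_0 \in [d]$, are exactly the columns of the single matrix product $A_3 Y$, where $A_3 \in \R^{n\times d}$ and $Y \in \R^{d\times d}$. Hence assembling all of $\h(y)$ amounts to one $n\times d$ by $d\times d$ multiplication, which by Def.~\ref{def:mat_complexity} costs $\Tmat(n,d,d)$, giving the second claim immediately.

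For $\f(x)$, I would decompose the computation into four stages dictated by Def.~\ref{def:f_x}, Def.~\ref{def:u_x}, and Def.~\ref{def:alpha_x}: (i) form the logits $\wt{\mathsf{A}}x$, (ii) exponentiate to obtain $u(x)$, (iii) form the normalizers $\alpha(x)$, and (iv) divide to obtain $\f(x)$. Using the tensor trick (Fact~\ref{fac:tensor_trick_basic}) as already exploited in the reformulation of $A(\mathsf X)$, the entry $(\wt{\mathsf{A}}_{j_0} x)_k$ equals $(A_1 X_1)_{j_0,*}\, W_{j_0-k}\, (A_2 X_2)_{k,*}^\top/d$. So I first form $Q := A_1 X_1$ and $K := A_2 X_2$, each an $n\times d$ by $d\times d$ product costing $\Tmat(n,d,d)$; the logits then constitute the $n\times n$ matrix $L$ with $L_{j_0,k} = Q_{j_0,*} W_{j_0-k} K_{k,*}^\top/d$.

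The crux, and the main obstacle, is stage (i): because the interposed weight $W_{j_0-k}$ depends on the index difference $j_0-k$, one cannot simply write $L = Q W K^\top$ as in ordinary attention, and a naive per-entry evaluation through a full $d\times d$ block would cost $O(n^2 d^2)$. The resolution is to invoke the support guarantee $\supp(W_i)\subseteq S$ with $|S| = O(d)$: writing $L_{j_0,k} = \sum_{(a,b)\in S} Q_{j_0,a}\,(W_{j_0-k})_{a,b}\,K_{k,b}/d$, each of the $n^2$ entries is a sum of only $O(d)$ terms, so the whole logit matrix is assembled in $O(n^2 d) = \Tmat(n,d,n)$ time. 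Equivalently, each support pair $(a,b)$ contributes a Toeplitz-modulated rank-one matrix $\diag(Q_{*,a})\,T^{(a,b)}\,\diag(K_{*,b})$ with $T^{(a,b)}_{j_0,k} = (W_{j_0-k})_{a,b}$, and summing these $O(d)$ matrices over all $n\times n$ entries costs $O(n^2 d)$.

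Finally, stages (ii)--(iv) are cheap: applying $\exp$ entrywise to $L$ yields $u(x)$ in $O(n^2)$ time; the $n$ inner products $\alpha(x)_{j_0} = \langle u(x)_{j_0}, \mathbf{1}_n\rangle$ cost $O(n^2)$ in total; and the $n$ scalings $\f(x)_{j_0} = \alpha(x)_{j_0}^{-1} u(x)_{j_0}$ cost $O(n^2)$. Since $n^2 \le \Tmat(n,d,n)$, these lower-order terms are absorbed, and summing across all stages gives the claimed $O(\Tmat(n,d,d) + \Tmat(n,d,n))$ for $\f(x)$. I expect the only genuinely non-routine point to be the justification of stage (i)---that the position-dependent rotations $W_{j_0-k}$ do not inflate the cost beyond $\Tmat(n,d,n)$---which is precisely where the bounded support $|S| = O(d)$ is essential.
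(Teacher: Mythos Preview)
Your proof is correct and follows the same outline as the paper's: compute $Q = A_1 X_1$ and $K = A_2 X_2$, then the attention matrix $A$, then $D$ and $D^{-1}A$, and separately $\h(y) = A_3 Y$. Your treatment of stage~(i) is in fact more careful than the paper's---the paper simply asserts that forming $A$ costs $\Tmat(n,d,d)+\Tmat(n,d,n)$ without explanation, whereas you correctly identify the sparse-support hypothesis $|S|=O(d)$ as the reason the position-dependent $W_{j_0-k}$ does not inflate the per-entry cost beyond $O(d)$.
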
\label{lem:app_f_h_time}

\begin{proof}
    We first show the time complexity of $\f(x)$.
    
    Let $A \in \R^{n \times n}$ be the RoPE attention matrix. Let $D = A{\bf 1}_n$. Then
    \begin{align*}
        \f(x) =D^{-1} A.
    \end{align*}

    Then, we need $\mathcal{T}_{\mathrm{mat}}(n,d,d)+\mathcal{T}_{\mathrm{mat}}(n,d,n)$ time to get $A$.
    
    Next, we need $O(n^2)$ time to get $D$.
    
    Now, we need $O(n^2)$ time to get $D^{-1}A$.
    
    Therefore, they cost time $O(\mathcal{T}_{\mathrm{mat}}(n,d,d)+\mathcal{T}_{\mathrm{mat}}(n,d,n))$ time .

    We show the time complexity of $\h(y)$.

    To get $\h(y) = A_3 Y$, it costs time $\mathcal{T}_{\mathrm{mat}}(n,d,d)$. 

    Thus, we complete the proof.
\end{proof}

\subsection{Time Complexity for Computing \texorpdfstring{$\c(x)$}{c(x)} Functions}\label{sec:app_c_time}

In this section, we use the vector and matrix multiplication time complexity from Definition~\ref{def:mat_complexity} and Fact~\ref{fac:mul_fact} to analyze the complexity of computing $\c(x)$.

\begin{lemma}
    We have $\c(x)$ from Def.~\ref{def:c}, then it costs $\mathcal{T}_{\mathrm{mat}}(n,n,d) + O(nd)$ to calculate $\c(x)$.
\end{lemma}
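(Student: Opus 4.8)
The plan is to read the cost of $\c(x)$ directly off its closed form in Definition~\ref{def:c}. Recall from that definition that $\c(x) = \f(x)\h(y) - E$, where the entry-wise relation $\c(x)_{j_0,i_0} = \langle \f(x)_{j_0}, \h(y)_{i_0}\rangle - E_{j_0,i_0}$ identifies $\f(x)$ as the $n \times n$ matrix whose $j_0$-th row is $\f(x)_{j_0}^\top$ (Definition~\ref{def:f_x}), $\h(y)$ as the $n \times d$ matrix whose $i_0$-th column is $\h(y)_{i_0} = A_3 Y_{*,i_0}$ (Definition~\ref{def:h_y}), and $E$ as the given $n \times d$ target matrix. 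The inner product $\langle \f(x)_{j_0}, \h(y)_{i_0}\rangle$ is then precisely the $(j_0,i_0)$ entry of the product $\f(x)\h(y) \in \R^{n \times d}$. I will treat $\f(x)$ and $\h(y)$ as already available, since the previous lemma (Lemma~\ref{lem:f_h_time}) accounts for their construction; the present lemma only measures the additional cost of forming $\c(x)$ from them.

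With this setup, I would split the computation into two stages. First, compute the matrix product $\f(x)\h(y)$: this multiplies an $n \times n$ matrix by an $n \times d$ matrix, which by Definition~\ref{def:mat_complexity} costs $\Tmat(n,n,d)$ time. Second, subtract the $n \times d$ matrix $E$ from the resulting $n \times d$ product; this is an entry-wise subtraction over $nd$ entries and therefore costs $O(nd)$ time. Summing the two stages yields the claimed total of $\Tmat(n,n,d) + O(nd)$.

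There is essentially no analytic obstacle here; the argument is a routine cost-counting over a matrix product plus a subtraction. The only point requiring care is the dimensional bookkeeping, namely confirming that $\f(x)$ genuinely has shape $n \times n$ (one $n$-vector $\f(x)_{j_0}$ per index $j_0 \in [n]$) and that $\h(y)$ has shape $n \times d$ (one $n$-vector $\h(y)_{i_0}$ per index $i_0 \in [d]$), so that the contraction dimension in the product is $n$ and the inner dimension reported to $\Tmat$ is the middle argument. Once these shapes are fixed, the two stated terms follow immediately, and Fact~\ref{fac:mul_fact} would only be invoked if one preferred to reorder the arguments of $\Tmat$. I would close by remarking that this cost is dominated by the single $n \times n$ by $n \times d$ multiply, the $O(nd)$ subtraction being lower order.
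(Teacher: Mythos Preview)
Your proposal is correct and matches the paper's own proof essentially line for line: write $\c(x)=\f(x)\h(y)-E$, charge $\Tmat(n,n,d)$ for the product of the $n\times n$ matrix $\f(x)$ with the $n\times d$ matrix $\h(y)$, and $O(nd)$ for the entry-wise subtraction of $E$. Your explicit remark that $\f(x)$ and $\h(y)$ are taken as already computed (their cost being handled in Lemma~\ref{lem:f_h_time}) is exactly the convention the paper uses.
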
\label{lem:app_c_time}

\begin{proof}
    We show the time complexity of $\c(x)$, where
    % \begin{align*}
         $\c(x)= \f(x)\h(y) - E$.
    % \end{align*}

    It costs $\mathcal{T}_{\mathrm{mat}}(n,n,d)$ time to get $\f(x)\h(y)$.

    Then, it requires $O(nd)$ time to get $\f(x)\h(y)-E$. 

    Therefore, they cost time $\mathcal{T}_{\mathrm{mat}}(n,n,d) + O(nd)$.

    Thus, we complete the proof.
\end{proof}

\subsection{Time Complexity for Computing \texorpdfstring{$\q(x)$ and $\p(x)$}{q(x) and p(x)} Functions}\label{sec:app_q_p_time}

In this section, we use the vector and matrix multiplication time complexity from Definition~\ref{def:mat_complexity} and Fact~\ref{fac:mul_fact} to analyze the complexity of computing $\q(x)$ and $\p(x)$.

\begin{lemma}\label{lem:app_p_q_time}
    Let $\q(x) \in \R^{n\times n}$ be defined as $\q(x) := \c(x) \h(y)^\top$ and $\p(x)$ be defined as $\p(x)_{j_0} := (\diag (\f(x)_{j_0} - \f(x)_{j_0} \f(x)_{j_0}^\top \q(x)_{j_0} \in \R^{n}$, given that $\f(x) \in \R^{n\times n}$ then $\q(x)$ can be computed in time of $O(\mathcal{T}_{\mathrm{mat}}(n,n,d))$ and $\p(x)$ can be computed in time of $O(n^2)$.
\end{lemma}

\begin{proof}
    Here we present $\q(x)$ as follows:
    % \begin{align*}
        $\q(x) = \c(x)\h(y)^\top$.
    % \end{align*}

    It costs $\mathcal{T}_{\mathrm{mat}}(n,d,n)$ time to get $\c(x)\h(y)^\top$, which equals to $O(\mathcal{T}_{\mathrm{mat}}(n,n,d))$.

    Next, we show the time complexity for 
    % \begin{align*}
        $\p(x)_{j_0} = (\diag (\f(x)_{j_0} - \f(x)_{j_0} \f(x)_{j_0}^\top) \q(x)_{j_0}$
    % \end{align*}
    It costs $O(n)$ time to get $\p(x)_{j_0}$. The reason is that $\f(x)_{j_0} \f(x)_{j_0}^\top$ is a rank one matrix and $\diag(\f(x)_{j_0})$ is a diagonal matrix
    
    Given $j_0 \in [n]$, the time for $\p(x)$ is $O(n^2)$ and we finish the proof. 
\end{proof}

\subsection{Time Complexity for Computing the Gradient of RoPE Attention}\label{sec:app_total_time}

\begin{theorem}[RoPE attention gradient computation time complexity, Restatement of Theorem~\ref{thm:grad_time}]\label{thm:app_grad_time}
    We define three $n\times d$ input sequence matrices as $A_1, A_2, A_3$, and the $n\times d$ approximated attention computation matrix as $E$. We define several input fixed matrices as $X_1, X_2, Y \in \R^{d\times d}$. We define $\mathsf{X} = X_1 \otimes X_2$, $\A = A_1 \otimes A_2$. We define $x:= \vect(\mathsf{X})$ and try to get the Loss function gradient. Let $g := \frac{\d \L(X_1,X_2)}{\d x}$ where $\L(X_1,X_2)$ from Def.~\ref{def:rope_approx}. Then, it costs $O(\mathcal{T}_{\mathrm{mat}}(n,d,d)+\mathcal{T}_{\mathrm{mat}}(n,d,n))$ time to get the gradient $g \in \R^{d^4}$.
\end{theorem}

\begin{proof}
    We show the time complexity of $g$ as follows. 
    \begin{enumerate}
        \item We need time $O(\mathcal{T}_{\mathrm{mat}}(n,d,d)+\mathcal{T}_{\mathrm{mat}}(n,d,n))$ for $\f(x),\h(y)$ from Lemma~\ref{lem:app_f_h_time}.
        \item We need time $O(\mathcal{T}_{\mathrm{mat}}(n,n,d)+\mathcal{T}_{\mathrm{mat}}(n,d,d))$ for $\c(x)$ from Lemma~\ref{lem:app_c_time}.
        \item We need time $O(\mathcal{T}_{\mathrm{mat}}(n,n,d))$ for $\q(x)$ from Lemma~\ref{lem:app_p_q_time}.
        \item We need time $O(n^2)$ for $\p(x)$ from Lemma~\ref{lem:app_p_q_time}.
        
    \end{enumerate}

    Therefore, it costs $O(\mathcal{T}_{\mathrm{mat}}(n,d,d)+\mathcal{T}_{\mathrm{mat}}(n,d,n))$ time overall for the gradient computation.

    Thus, we complete the proof.
\end{proof}

\section{Low Rank Approximation of RoPE Attention} \label{sec:app_low_rank}

This section presents the fast running time using the low-rank approximations where the low-rank matrices are generated from the polynomial method (see Lemma~\ref{lem:aa22}).

\subsection{Approximate \texorpdfstring{$\f$}{f} Using Low Rank Approximation}\label{sec:app_low_rank_f}

In this section, we use the low-rank approximation technique to approximate $\f(x)$

% \begin{lemma}[Low Rank Approximate $\f(x)$, Restatement of Lemma~\ref{lem:low_rank_f}]
% \label{lem:app_low_rank_f}
\begin{lemma}[Low Rank Approximate $\f(x)$]
\label{lem:app_low_rank_f}
For any $B = o(\sqrt{\log n})$, let $k_1$ equals to $n^{o(1)}$ such that: 
Suppose we have two $n\times d$ matrices $A_1, A_2$, $X_1, X_2 \in \R^{d\times d}$ and $\mathsf X = X_1 \otimes X_2 \in \R^{d^2 \times d^2}$. Assume we can use $O(\log n)$ bits to write every entry from $\f(x)$.
It holds that $\max \{ \| A_1 X_1 \|_{\infty}, \| A_2 X_2 \|_{\infty} \} \leq B$, then there are three matrices $U_1, V_1,  W_1\in \R^{n \times k_1}$ such that $\| U_1 V_1^\top - \f(x) \|_{\infty} \leq \epsilon/\poly(n)$. Here $\f(x) = D^{-1}A \in \R^{n \times n}$ where $A$ is defined as the matrix representation of $\exp((\mathsf A \oslash \mathsf W)\vect(X))$, and $D = \diag( A /d) {\bf 1}_{n}$.  
Moreover, these matrices $U_1, V_1$ can be created explicitly in $n^{1+o(1)}$ time.
\end{lemma}

\begin{proof}
    By definition of $A(\mathsf{X})$, we have
    \begin{align*}
        \vect(A(\mathsf{X})) = \exp (\mathsf A \oslash \mathsf W)\vect(X).
    \end{align*}
    
    Hence, using the tensor trick, we have
\begin{align*}
    A(\mathsf{X})_{i,j}
    = & ~ \exp((\mathsf A_{i+(j-1)n}  \otimes \mathsf W_{i+(j-1)n}) \vect({\mathsf{X}})/d) \\
    = & ~ \exp((\mathsf A_{i+(j-1)n}  \otimes w_{i-j}^\top)\vect({\mathsf{X}})/d).
\end{align*}

    We define $w_{i-j} := \vect(W_{i-j}) \in \R^{d^2}$ and define $\mathsf{W}$ such that $\mathsf{W}_{j_0}$ is an $1 \times d^2$ block and $\mathsf{W}_{i+(j-1)n} := w_{i-j}^\top$. We also define $\mathsf A := A_1 \otimes A_2 \in \R^{n^2 \times d^2}$ and $\mathsf X := X_1 \otimes X_2 \in \R^{d^2 \times d^2}$. We use $\mathsf A_{j_0}$ to denote the a $1 \times d^2$ subblock of $\mathsf A$.

We can reformulate the attention matrix $A$ as, for $i , j \in [n]$
\begin{align*}
    A(\mathsf{X})_{i,j} = \exp(\underbrace{\mathsf A_{i+(j-1)n}}_{1 \times d^2} \underbrace{\mathsf{X}}_{d^2 \times d^2} \underbrace{w_{i-j} / d}_{d^2 \times 1}).
\end{align*}

Thus, we can show that
\begin{align*}
   &~ \mathsf A_{i+(j-1)n} \mathsf{X} w_{i-j}, \\
   = &~ (A_{1,i, *} \otimes A_{2,j, *})(X_1 \otimes X_2)\vect(W_{i-j}) \\
   = &~ A_{1,i, *} X_1 W_{i-j}X_2^\top A^\top_{2, j, *}
\end{align*}
where 1st equality uses definitions of $w_{i-j}, \mathsf A$, and $\mathsf X$, and the second step uses the tensor trick.
We complete our proof after applying Lemma~\ref{lem:app_fast_forward}.
\end{proof}

\subsection{Approximate \texorpdfstring{$\c$}{c} Using Low Rank Approximation}\label{sec:app_low_rank_c}

In this section, we use the low-rank approximation technique to approximate $\c(x)$

% \begin{lemma}[Low Rank Approximate $\ell(x)$, Restatement of Lemma~\ref{lem:low_rank_c}]\label{lem:app_low_rank_c}
\begin{lemma}[Low Rank Approximate $\ell(x)$]\label{lem:app_low_rank_c}
    Let $d$ equal $O(\log n)$. Suppose we can use $O(\log n)$ bits to write every entry in $E, \h(y) \in \R^{n\times d}$. Define the $\c(x) \in \R^{n\times d}$ as specified in Def.~\ref{def:c}. Then, we have $U_1, V_1 \in \mathbb{R}^{n \times k_1}$ such that 
    $ \| U_1 V_1^\top \h(y) - E - \c(x) \|_\infty \leq \epsilon/\operatorname{poly}(n).$
\end{lemma}

\begin{proof}
Here, we present the bound as follows.
\begin{align*}
    \| U_1V_1^\top \h(y) - E - \c(x) \|_\infty 
    = & ~ \| U_1V_1^\top \h(y) - \f(x)\h(y) \|_\infty \\
    = & ~ \| \h(y) \|_\infty \cdot \| U_1V_1^\top - \f(x) \|_\infty\\
    \leq & ~ \epsilon / \poly(n),
\end{align*}
where the 1st is because of Def.~\ref{def:c}, 2nd step is based on the distributive law, and 3rd step is due to Lemma~\ref{lem:app_low_rank_f}.
\end{proof}

\subsection{Approximate \texorpdfstring{$\q$}{q} Using Low Rank Approximation}\label{sec:app_low_rank_q}

In this section, we use the low-rank approximation technique to approximate $\q(x)$

% \begin{lemma}[Low Rank Approximate $\q(x)$, Restatement of Lemma~\ref{lem:low_rank_q}]\label{lem:app_low_rank_q}
\begin{lemma}[Low Rank Approximate $\q(x)$]\label{lem:app_low_rank_q}
Let $k_2 = n^{o(1)}$.
We define $\c(x) \in \R^{n \times d}$ based on Def.~\ref{def:c}, and $\h(y) \in \R^{n \times d}$ based on Def.~\ref{def:h_y}. 
We suppose $\q(x)$ is equal to $\h(y) \c(x)^\top$, which is an $n \times n$ matrix. Let $U_2, V_2 \in \R^{n \times k_2}$ such that $\| U_2 V_2^\top - \q(x) \|_{\infty} \leq \epsilon / \poly(n)$. In $n^{1+o(1)}$ time, we can get $U_2, V_2$.
\end{lemma}
\begin{proof}

Let $\wt{\q}(x) \approx \q(x)$

By Lemma~\ref{lem:app_low_rank_c}, $U_1 V_1^\top \h(y) - E$ approximately equals to $\c(x)$.

Then we define $\wt{\q}(x) = \h(y) ( U_1 V_1^\top \h(y) - E  )^\top $.

We can use the low-rank technique to represent
% \begin{align*}
    $\wt{\q}(x) =  \h(y) \h(y)^\top  V_1 U_1^\top - \h(y) E^\top $.
% \end{align*}

Also, $\h(y)^\top V_1$ can be computed at first because it takes $n^{1+o(1)}$ time. 

Given that all low-rank matrices, we have $U_2 , V_2 \in \R^{n \times k_2}$ where $k_2 = \max\{d,k\} + d  = n^{o(1)}$.

Here, we present the proof for obtaining the bound.
\begin{align*}
   \|  \wt{\q}(x) - \q(x) \|_{\infty} 
   = & ~ \| \h(y) (U_1V_1^\top \h(y) ) - E)^\top - \h(y) \c(x)^\top \|_{\infty} \\
   \leq & ~ \| U_1V_1^\top \h(y) ) - E - \c(x) \|_{\infty} \cdot  \| \h(y) \|_{\infty} \cdot d\\
   \leq & ~ \epsilon / \poly(n)
\end{align*}
where the first step is based on the definition of $\q(x)$ and $\wt{\q}(x)$, the second step is due to the distributive law, and the third step derives from Lemma~\ref{lem:app_low_rank_c}.

Thus, we complete the proof.
\end{proof}

\subsection{Approximate \texorpdfstring{$\p$}{} Using Low Rank Approximation}\label{sec:app_low_rank_p}

In this section, we use the low-rank approximation technique to approximate $\p(x)$. Specifically, we apply the polynomial methods to $\p_1(x)$ and $\p_2(x)$ where $\p(x) = \p_1(x) - \p_2(x)$.

First, we show the low-rank approximation of $\p_1(x)$.

% \begin{lemma}[Low Rank Approximate $\p_1(x)$, Restatement of Lemma~\ref{lem:low_rank_p1}]\label{lem:app_low_rank_p1}
\begin{lemma}[Low Rank Approximate $\p_1(x)$]\label{lem:app_low_rank_p1}
Let $k_1 = n^{o(1)}$. Let $k_2 = n^{o(1)}$.
We suppose $\p_1(x)$ is $\diag(\f(x))\q(x)$, and $U_1, V_1$ be two $n \times k_1$ matrices, in which $\| U_1 V_1^\top -f (x) \|_{\infty} \leq \frac{\epsilon}{\poly(n)}$. We suppose two $n \times k_2$ matrices $U_2, V_2$ in which $\| U_2 V_2^\top -\q(x) \|_{\infty} \leq \frac{\epsilon}{\poly(n)}$. Then we have two $n \times k_3$ matrices in which $\| U_3 V_3^\top - \p_1(x) \|_{\infty} \leq \epsilon / \poly(n)$. We can construct $U_3, V_3$ in $n^{1+o(1)}$ time.
\end{lemma}
\begin{proof}

Let $U_3 = U_1 \oslash U_2$ and $V_3 = V_1 \oslash V_2$, and we can use $n^{1+o(1)}$ time to get them.

Let $\wt{\f}(x) = U_1 V_1^\top$ and $\wt{\q}(x) = U_2 V_2^\top$.

Then, we have the following by Fact~\ref{fac:olash_folklore}.
\begin{align*}
\| U_3 V_3^\top - \p_1(x) \|_{\infty}
\leq & ~ \| U_3 V_3^\top - \diag(\f(x))\q(x) \|_{\infty} \\
= & ~ \| (U_1 \oslash U_2)  (V_1 \oslash V_2)^\top - \diag(\f(x))\q(x) \|_{\infty} \\
= & ~ \| \diag(U_1 V_1^\top) ( U_2 V_2^\top ) - \diag(\f(x))\q(x) \|_{\infty} \\
= & ~ \| \diag(\wt{\f}(x)) \wt{\q}(x) - \diag(\f(x))\q(x) \|_{\infty} \\
= & ~ \| \diag(\wt{\f}(x)) \wt{\q}(x) - \diag(\wt{\f}(x)) \q(x)  + \diag(\wt{\f}(x))\q(x) - \diag(\f(x)) \q(x) \|_{\infty} \\
\leq & ~ \| \diag(\wt{\f}(x)) \wt{\q}(x) - \diag(\wt{\f}(x)) \q(x)  \|_{\infty} + \| \diag(\wt{\f}(x)) \q(x) - \diag(\f(x)) \q(x) \|_{\infty} \\
\leq & ~ \frac{\epsilon}{\poly(n)}
\end{align*}
where the first inequality is because of the def. of $\p_1(x)$, the second equality is due to the def. of $U_3, V_3$, the third equality is based on Fact~\ref{fac:olash_folklore}, the fourth equality is due to the def. of $\wt{\f}(x)$ and $\wt{\q}(x)$, the fifth equality is due to simple arithmetic, the sixth inequality is because of the triangle inequality, and the seventh inequality derives from Lemma~\ref{lem:app_low_rank_f} and Lemma~\ref{lem:app_low_rank_q}.  
\end{proof}

Next, we show the low-rank approximation of $\p_2(x)$.
% \begin{lemma}[Low Rank Approximate $\p_2(x)$, Restatement of Lemma~\ref{lem:low_rank_p2}]\label{lem:app_low_rank_p2}
\begin{lemma}[Low Rank Approximate $\p_2(x)$]\label{lem:app_low_rank_p2}
Let $k_1 = n^{o(1)}$. Let $k_2 = n^{o(1)}$. Let $k_4 = n^{o(1)}$.
Let $\p_2(x) \in \R^{n\times n}$ where for $j_0$ in set $[n]$, $j_0$ represents $j_0$-th column, $\p_2(x)_{j_0} = \f(x)_{j_0} \f(x)_{j_0}^\top \q(x)_{j_0}$. We suppose $U_1, V_1 \in \R^{n \times k_1}$ in which $\| U_1 V_1^\top - \f(x) \|_{\infty} \leq \frac{\epsilon}{\poly(n)}$. We suppose two $n \times k_2$ matrices $U_2, V_2$ in which $\| U_2 V_2^\top -\q(x) \|_{\infty} \leq \frac{\epsilon}{\poly(n)}$. Then, we have $U_4, V_4 \in \R^{n \times k_4}$ such that $\| U_4 V_4^\top - \p_2(x) \|_{\infty} \leq \epsilon / \poly(n)$. We can get $U_4, V_4$ in $n^{1+o(1)}$ time.
\end{lemma}

\begin{proof}

Let $\r(x) \in \R^n$ be $\r(x)_{j_0} := \f(x)_{j_0}\q(x)_{j_0}$.

We define $\wt{\r}(x) \approx \r(x)$.

Let $(U_1 V_1)_{j_0,*}^\top \approx \f(x)_{j_0}$ and $(U_2 V_2)_{j_0,*}^\top \approx \q(x)_{j_0}$.

Then, we define $\wt{\r}(x)_{j_0}$ as the inner product of $\wt{\f}(x)_{j_0}$ and $\wt{\q}(x)_{j_0}$, and by Fact~\ref{fac:inner_prod}, we have $\wt{\r}(x)_{j_0} = (U_1V_1)_{j_0,*} \cdot (U_2V_2)_{j_0,*}^\top$

Then, it costs $n^{1+o(1)}$ time if we compute $V_1 V_2^\top$ first.

Now, we show
\begin{align*}
\wt{\r}(x)_{j_0} 
= & ~ (U_1 V_1)_{j_0,*} \cdot (U_2 V_2)_{j_0,*}^\top \\
= & ~ \underbrace{ (U_1)_{j_0,*} }_{1 \times k_1} \underbrace{ V_1 V_2^\top }_{k_1 \times k_2}   \underbrace{ (U_2)_{j_0,*}^\top }_{k_2 \times 1}
\end{align*}
Once the $V_1 V_2^\top$ are pre-computed, the above step only takes $O(k_1k_2)$ time. Given that $j_0 \in [n]$, we can have the total time $O(n k_1 k_2) = n^{1+o(1)}$.

We suppose $\wt{\f}(x)$ approximates $\f(x)$ and set is equal to $U_1 V_1^\top$. Then, we are able to approximate $\p_2(x)$ using $\wt{\f}(x)$ and $\wt{\r}(x)$ as follows.

We suppose $\wt{\p}_2(x)$ equals to $\wt{\f}(x) \diag(\wt{\r}(x))$. $U_4$ and $V_4$ can be obtained since we can use the low-rank approximation technique to represent $\wt{\f}(x)$ and $\diag(\wt{\r}(x))$ is a diagonal matrix. Basically $U_4 = U_1$ and $V_4  = \diag(\wt{\r}(x) ) V_1$.

Now, we need to control the error. We have
\begin{align*}
    \| U_4 V_4^\top - \p_2(x) \|_{\infty}
    = & ~ \| \wt{\p}_2(x) - \p_2(x) \|_{\infty} \\
    = & ~ \max_{j_0 \in [n] } \| \wt{\f}(x)_{j_0} \wt{\r}(x)_{j_0} - \f(x)_{j_0} \r(x)_{j_0} \|_{\infty} \\
    = & ~ \max_{j_0 \in [n] } \| \wt{\f}(x)_{j_0} \wt{\r}(x)_{j_0} - \wt{\f}(x)_{j_0} \r(x)_{j_0} + \wt{\f}(x)_{j_0} \r(x)_{j_0}   - \f(x)_{j_0} \r(x)_{j_0} \|_{\infty} \\
    \leq & ~ \max_{j_0 \in [n] } \| \wt{\f}(x)_{j_0} \wt{\r}(x)_{j_0} - \wt{\f}(x)_{j_0} \r(x)_{j_0} \|_{\infty} + \| \wt{\f}(x)_{j_0} \r(x)_{j_0}   - \f(x)_{j_0} \r(x)_{j_0} \|_{\infty} \\
    \leq & ~ \max_{j_0 \in [n] } \|\wt{\f}(x)_{j_0}\|_\infty \cdot \| \wt{\r}(x)_{j_0} - \r(x)_{j_0} \|_{\infty} + \| \wt{\f}(x)_{j_0} - \f(x)_{j_0} \|_{\infty} \cdot \|\r(x)_{j_0}\|_\infty \\
    \leq & ~ \epsilon/\poly(n)
\end{align*}
where the 1st equality is based on the def. of $\wt{\p}_2(x)$, the 2nd equality is due to def. of $\wt{\p}_2(x)$ and $\p_2(x)$, the 3rd equality is due to simple mathematical properties, the 4th step is due to the triangle inequalities, and the 5th step is due to the distributive law.

Thus, we complete the proof.
\end{proof}

\subsection{Fast Computation in Almost Linear Time}\label{sec:app_low_rank_final}

In this section, we present our main result. With the low-rank approximation, we can approximate the RoPE gradient computations in almost linear time.

\begin{theorem}[Main result, Low Rank Approximate RoPE Attention Gradient, Restatement of Theorem~\ref{thm:grad_low_rank}]\label{thm:app_grad_low_rank}
    Assuming the entries of $A_1, A_2, X_1, X_2, Y, E$ are represented using $O(\log n)$ bits, there is an $n^{1+o(1)}$ time algorithm to solve $\mathsf{AAttLGC}(n, d = O(\log n), B = o(\sqrt{\log n} ))$, from Def.~\ref{def:rope_approx}, with the accuracy upper bounded by $\frac{1}{\poly(n)}$ . To be more specific, a gradient vector $\wt{g} \in \R^{d^4}$ comes out of our algorithm where $\| \frac{\d \L}{\d x} - \wt{g} \|_{\infty} \leq \frac{1}{\poly(n)}$.
\end{theorem}
\begin{proof}
By Lemma~\ref{lem:app_low_rank_p2} and  Lemma~\ref{lem:app_low_rank_p1}, There are matrices $\p(x), \p_1(x) \in \R^{n \times n}$ and $\p_2(x)$, we have
\begin{align*}
    \p(x) = \p_1(x) - \p_2(x).
\end{align*}

We assume Lemma~\ref{lem:app_low_rank_p1} and Lemma~\ref{lem:app_low_rank_p2} are true from Lemma~\ref{lem:app_low_rank_f} to Lemma~\ref{lem:app_low_rank_q}. Thus, we can have the following based on Lemma~\ref{lem:app_low_rank_p1} Lemma~\ref{lem:app_low_rank_p2}.

We can use low-rank approximation technique to represent $\wt{\p}_1(x) = U_3 V_3^\top$ and $\wt{\p}_2(x) = U_3 V_3^\top$ as the approximation to $\p_1(x)$ and $\p_2(x)$ respectively.

The cost is $n^{1+o(1)}$ time for every Lemma in Lemmas~\ref{lem:app_low_rank_f}, \ref{lem:app_low_rank_c}, \ref{lem:app_low_rank_q}, \ref{lem:app_low_rank_p1} and \ref{lem:app_low_rank_p2}.

We have the reformulated gradient from Lemma~\ref{lem:app_grad_closed} as follows.
\begin{align*}
    \frac{\d \L(x)}{\d x} = \underbrace{\wt{\A}^\top}_{d^4\times n^2} \vect(\underbrace{\p(x)}_{n\times n})
\end{align*}

Therefore, $n^{1+o(1)}$ is the total running time.

We show that
\begin{align*}
    \| \frac{\d \L(X)}{\d x} - \wt{g} \|_{\infty}
    = & ~ \| \underbrace{\wt{\A}^\top}_{d^4\times n^2} \vect(\underbrace{\p(x)}_{n\times n}) -  \underbrace{\wt{\A}^\top}_{d^4\times n^2} \vect(\underbrace{\wt{\p}(x)}_{n\times n}) \|_{\infty}\\
    = & ~ \| \underbrace{\wt{\A}^\top}_{d^4\times n^2} (\vect(\underbrace{\p(x)}_{n\times n}) -  \vect(\underbrace{\wt{\p}(x)}_{n\times n})) \|_{\infty}\\
    = & ~ \| \underbrace{\wt{\A}^\top}_{d^4\times n^2} \|_\infty 
    \|\vect(\underbrace{\p(x)}_{n\times n}) -  \vect(\underbrace{\wt{\p}(x)}_{n\times n}) \|_{\infty}\\
    = & ~ \| \underbrace{\wt{\A}^\top}_{d^4\times n^2} \|_\infty 
    \|\p(x) - \wt{\p}(x) \|_{\infty}\\
    \leq & ~ \epsilon/\poly(n).
\end{align*}
where the first equality is based on Lemma~\ref{lem:app_grad_closed}, the second equality is due to the distributive law, the third equality derives from the definition of $\ell_\infty$ norm, the fourth equality is due to the def. of vectorization, and the fifth inequality derives from the Lemmas in Lemma~\ref{lem:app_low_rank_p1} and Lemma~\ref{lem:app_low_rank_p2}.

We choose $\epsilon = \frac{1}{\poly(n)}$.

Thus, we have finished our proof.
\end{proof}

\begin{remark}\label{rem:assumption}
The assumption in Theorem~\ref{thm:grad_low_rank} is practical. In practice, especially in recent long context tasks, the $n$ is large, e.g., $n = 2 \times 10^6$ for Google's Gemini 1.5 Pro
~\cite{geminipro}, while the model training uses a half-precision floating-point format, e.g., the bit number is $16$. Furthermore, our assumption is ``tight'', where if we slightly weaken the assumption, there is no algorithm that can solve the RoPE attention gradient computation in truly sub-quadratic complexity (Theorem~\ref{thm:lower_bound}). 
\end{remark}

Our Theorem~\ref{thm:grad_low_rank} accurately approximates ($\epsilon  = 1/ \poly(n)$) the RoPE attention gradient computation in almost linear time $n^{1+o(1)}$ under practical assumptions (see the above Remark~\ref{rem:assumption}).
Thus, our methods solve the last puzzle of RoPE attention acceleration. Combined with previous work on RoPE attention inference (see Lemma~\ref{lem:app_fast_forward}), this may make RoPE attention practical as we overcome the theoretical quadratic time complexity barrier both in inference and training.  
\section{Hardness}\label{sec:app_hardness}
In this section, we provide the lower bound results to compute the gradient of RoPE attention.
\begin{theorem}[Lower bound]\label{thm:app_lower_bound}
    Assuming $\mathsf{SETH}$, for any $q > 0$, for the $\mathsf{ARAttLGC}(n,d=O(\log n), B=\omega(\sqrt{\log n})$, there does not exist an algorithm which can be executed in time $O(n^{2-q})$ based on Def.~\ref{def:rope_approx}.
\end{theorem}

\begin{proof}
    We pick all of the $W_{-(n-1)},\hdots, W_{n-1} \in \R^{d\times d}$ as an identity matrix $I_d$. Therefore, the gradient computation of RoPE attention can be treated as the gradient computation of classic attention. Thus, our lower bound result can derive from~\cite{as24b}. 
\end{proof}

\end{document}